\documentclass[11pt]{article}
\usepackage{natbib}
\usepackage{ifthen}
\usepackage{mdwlist}
\usepackage{amsmath,amssymb,amsfonts,amsthm}
\usepackage{bm}
\usepackage[colorlinks=true,linkcolor=Pink,citecolor=Pink,urlcolor=Pink]{hyperref}
\usepackage{enumitem}
\usepackage{graphicx}
\usepackage{xspace}
\usepackage{verbatim}
\usepackage[margin=1in]{geometry}
\usepackage{color}
\usepackage{thm-restate}
\usepackage{latexsym}
\usepackage{epsfig}
\usepackage[colorinlistoftodos,textsize=scriptsize]{todonotes}
\usepackage{fontawesome5}

\usepackage{algorithm}
\usepackage{algorithmic}
\usepackage[utf8]{inputenc} 
\usepackage[T1]{fontenc}    
\usepackage{hyperref}       
\usepackage{url}            
\usepackage{booktabs}       
\usepackage{amsfonts}       
\usepackage{nicefrac}       
\usepackage{microtype}      
\usepackage{xcolor}         
\usepackage{amsmath}
\usepackage{amsthm}
\usepackage{graphicx}
\usepackage{multirow}
\usepackage{titletoc}

\definecolor{Pink}{RGB}{240, 82, 156}
\def\withcolors{1}
\def\withnotes{1}

\renewcommand{\epsilon}{\ve}
\def\ve{\varepsilon}

\newcommand{\E}{\mbox{\bf E}}

\newcommand{\pr}[2][]{\mathrm{Pr}\ifthenelse{\not\equal{}{#1}}{_{#1}}{}\!\left[#2\right]}
\newcommand{\R}{\mathbb{R}}

\newtheorem{theorem}{Theorem}

\newtheorem{corollary}[theorem]{Corollary}

\numberwithin{theorem}{section} 
\numberwithin{nontheorem}{section} 
\numberwithin{proposition}{section} 
\numberwithin{observation}{section} 
\numberwithin{remark}{section} 
\numberwithin{fact}{section} 
\numberwithin{lemma}{section} 
\numberwithin{claim}{section} 
\numberwithin{corollary}{section} 
\numberwithin{case}{section} 
\numberwithin{dfn}{section} 
\numberwithin{definition}{section} 
\numberwithin{question}{section} 
\numberwithin{openquestion}{section} 
\numberwithin{res}{section}

\newcommand{\cL}{\mathcal{L}}

\newcommand{\method}{DriftXpress}

\ifnum\withcolors=1
  \newcommand{\gcolor}[1]{{\color{red}#1}} 
\else

  \newcommand{\gcolor}[1]{{#1}}
\fi

\ifnum\withnotes=1
  \newcommand{\gnote}[1]{\par\gcolor{\textbf{G: }\sf #1}} 
  \newcommand{\gfootnote}[1]{\footnote{{\bf \gcolor{Gautam}}: {#1}}}

\else
  \newcommand{\gnote}[1]{}
  \newcommand{\gfootnote}[1]{}

\fi
\newcommand{\ignore}[1]{\leavevmode\unskip} 

\title{DriftXpress: Faster Drifting Models via Projected RKHS Fields\thanks{EC, GK, and SM are listed in alphabetical order.}}

\author {
Ali Falahati\thanks{Cheriton School of Computer Science, University of Waterloo and Vector Institute. {\tt afalahat@uwaterloo.ca}.} \and Elliot Creager\thanks{Department of Electrical and Computer Engineering, University of Waterloo and Vector Institute. {\tt creager@uwaterloo.ca.}} \and 
Gautam Kamath\thanks{Cheriton School of Computer Science, University of Waterloo and Vector Institute. {\tt g@csail.mit.edu}. Supported by a Canada CIFAR AI Chair, an NSERC Discovery Grant, and an Ontario Early Researcher Award.} \and Shubhankar Mohapatra\thanks{A*STAR Centre for Frontier AI Research, Singapore. {\tt shubhankar\_mohapatra@a-star.edu.sg.} Supported by an NRF postdoctoral award.} }

\begin{document}
\maketitle

\begin{abstract}
Drifting Models have emerged as a new paradigm for one-step generative modeling, achieving strong image quality without iterative inference. The premise is to replace the iterative denoising process in diffusion models with a single evaluation of a generator. However, this creates a different trade-off: drifting reduces inference cost by moving much of the computation into training. We introduce \textbf{DriftXpress}, an accelerated formulation of drifting models based on projected RKHS fields. DriftXpress approximates the drifting kernel in a low-rank feature space. This preserves the attraction-repulsion structure of the original drifting field while reducing the cost of field evaluation. 
Across image-generation benchmarks, DriftXpress achieves comparable FID to standard drifting while reducing wall-clock training cost. These results show that the training-inference trade-off of drifting models can be pushed further without giving up their one-step inference advantage.

\vspace{0.5em}
\begin{center}
\noindent\faGithub\  Code: \texttt{\href{https://github.com/Mortrest/DriftXpress}{github.com/Mortrest/DriftXpress}}
\end{center}
\end{abstract}

\section{Introduction}

One-step generative models are appealing because they reduce sampling to a single forward pass. However, achieving this with high fidelity has proven difficult.
Diffusion and flow-based generators~\citep{sohl2015deep,ho2020denoising,song2020score,lipman2022flow,liu2022flow,albergo2023stochastic} obtain strong sample quality by distributing generation across many iterative refinement steps, each individually simple.
Collapsing these steps into one, whether by distillation~\citep{salimans2022progressive,luo2023comprehensive,yin2024one,zhou2024score} or by direct single-step training~\citep{song2023consistency,frans2024one,geng2025mean,boffi2024flow}, typically requires either a pretrained multi-step teacher or approximations of the underlying ODE or SDE trajectories.
Generative Adversarial Networks (GANs)~\citep{goodfellow2014generative} are natively one-step, but rely on adversarial optimization and inherit its well-known instabilities~\citep{mescheder2018which, arjovsky2017towards, salimans2016improved}.

Drifting Models~\citep{deng2026drifting} provide an alternative route to one-step generation by learning an attraction--repulsion field in feature space: generated samples are pulled toward the data distribution and pushed away from the current model distribution. Their premise is to replace diffusion-style iterative inference with a single generator evaluation. While drifting models are effective in practice, matching or surpassing strong one-step and multi-step baselines, they introduce a different trade-off: inference is cheap, but training is expensive because the model must repeatedly estimate a data-dependent vector field. This paper asks whether that trade-off can be pushed further:
\begin{quote}
\emph{``Can we reduce the training cost of drifting models while preserving their one-step inference advantage?''}
\end{quote}

Addressing this challenge is critical, as once inference is reduced to a single step, training becomes the primary computational bottleneck. We observe that in standard drifting, this bottleneck arises from estimating the field via kernel evaluations between generated and training samples. Since the field depends on these evaluations, we focus on reducing the cost of field estimation by replacing exact kernel interactions with projected kernel evaluations while preserving the field's form.

The reproducing kernel Hilbert space (RKHS) perspective provides the foundation for kernel methods. RKHSs are widely used because positive-definite kernels enable nonlinear estimators to be studied through linear geometry in function space, which underlies a broad class of methods, including support vector machines and kernel principal component analysis~\citep{cortes1995support,scholkopf1998nonlinear}. The Nystr\"om method is a standard tool for scaling such kernel-based algorithms~\cite{drineas2005nystrom}. Rather than working with the full kernel matrix or operator, it approximates the kernel interaction structure using a subset of landmark columns, yielding a low-rank surrogate that preserves the kernel formulation while reducing computational cost~\citep{williams2001using}.

Motivated by this view, we introduce \method, a projected RKHS formulation of drifting models (Figure~\ref{fig:overview}). Instead of recomputing exact attraction against the training support at every step, \method{} approximates these kernel interactions using a Nystr\"om feature map built from landmark points selected before training. This turns repeated training-support interactions into fast query-landmark evaluations and cached summary multiplications, while preserving the attraction--repulsion form of the drifting field. Empirically, this yields two sources of acceleration: cheaper field evaluation per step and faster wall-clock convergence, since projected attraction uses a summary of the full positive support rather than only mini-batch interactions.

Our contributions are threefold. First, we recast drifting fields in the RKHS induced by the drifting kernel, enabling low-rank landmark-based evaluation. Second, we introduce \method, which replaces repeated exact attraction with cached summaries of training data and composes these summaries across shards for larger datasets. Third, we show that \method{} preserves comparable sample quality while substantially reducing training cost.

\begin{figure}[t]
    \centering
    \vspace{-0.75em}
    \includegraphics[width=\linewidth]{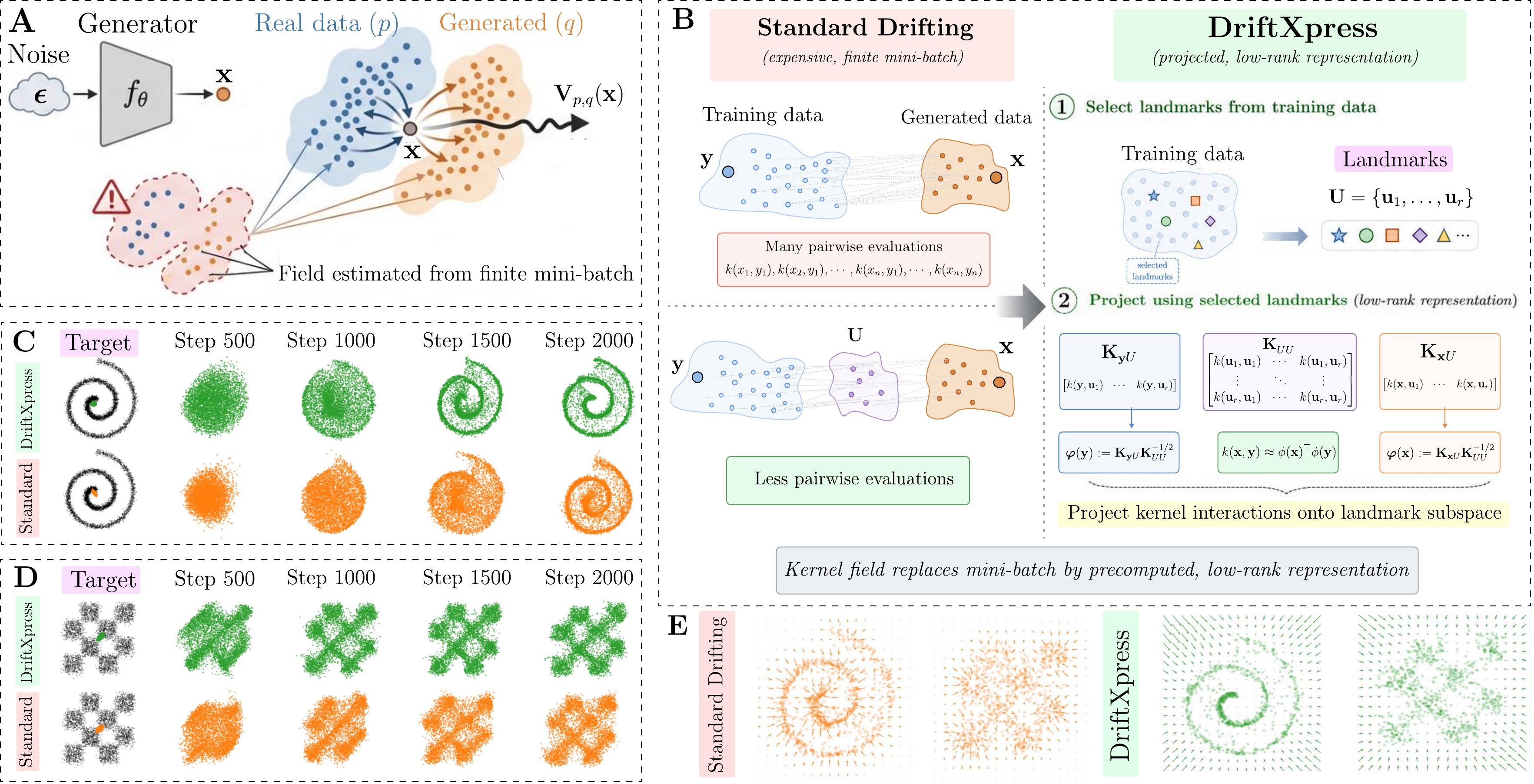}
\caption{\textbf{Overview of \method{}.}
(A) Drifting trains a generator by estimating an attraction--repulsion field from finite mini-batches.
(B) \method{} replaces repeated exact attraction with a Nystr\"om landmark projection.
(C--D) Swissroll and checkerboard training trajectories show faster convergence to structured targets than standard drifting.
(E) \method{} yields smoother vector fields that better follow the target geometry.}
    \label{fig:overview}
\vspace{-1.25em}
\end{figure}

\section{Background: Drifting Models}
\label{sec:background}

Drifting Models~\citep{deng2026drifting} provide a one-step perspective on generative modeling by constructing a transport rule from noise to the data distribution. We briefly review the Drifting Model framework.
 
\paragraph{Pushforward at training time.}
Let $f_\theta: \R^C \to \R^D$ be a generator network, $p$ be the data distribution, and $p_\epsilon$ be a noise distribution on $\R^C$. For a noise input $\boldsymbol{\epsilon}\sim p_\epsilon$, the generator output is
$\mathbf{x}=f_\theta(\boldsymbol{\epsilon})\in\R^D$.
The output $\mathbf{x}$ follows the pushforward distribution
$q=(f_\theta)_{\#}p_\epsilon$, that is, the distribution obtained by mapping samples from $p_\epsilon$ through $f_\theta$.
As training iterates, the sequence of models $\{f_{\theta_i}\}$ induces a sequence of model distributions
$q_i=(f_{\theta_i})_{\#}p_\epsilon$ that converge towards $p$.

\paragraph{Drifting field.}
The evolution of samples at iteration $i$ is governed by
$\mathbf{V}_{p,q_i}: \R^D \to \R^D$:
\begin{equation}
  \mathbf{x}_{i+1}
  =
  \mathbf{x}_i + \mathbf{V}_{p,q_i}(\mathbf{x}_i).
  \label{eq:drift_update}
\end{equation}
For a generic model distribution $q$, the field satisfies the anti-symmetry property
$\mathbf{V}_{p,q} = -\mathbf{V}_{q,p}$, which guarantees that
$\mathbf{V}_{p,q}(\mathbf{x}) = 0$ for all $\mathbf{x}$ when $p=q$.

\paragraph{Attraction-repulsion instantiation.}
The drifting field decomposes into attraction toward data and repulsion from generated samples:
\begin{equation}
  \mathbf{V}_{p,q}(\mathbf{x})=
  \mathbf{V}^+_p(\mathbf{x})
  -
  \mathbf{V}^-_q(\mathbf{x}),
  \label{eq:drifting_field}
\end{equation}
where $\mathbf{y}^+ \sim p$ denotes a data sample and $\mathbf{y}^- \sim q$ denotes a generated sample. The two components are
\begin{equation}
  \mathbf{V}^+_p(\mathbf{x})
  =
  \frac{1}{Z_p(\mathbf{x})}
  \E_{\mathbf{y}^+ \sim p}
  \big[
  k(\mathbf{x}, \mathbf{y}^+)(\mathbf{y}^+ - \mathbf{x})
  \big],
  \qquad
  \mathbf{V}^-_q(\mathbf{x})
  =
  \frac{1}{Z_q(\mathbf{x})}
  \E_{\mathbf{y}^- \sim q}
  \big[
  k(\mathbf{x}, \mathbf{y}^-)(\mathbf{y}^- - \mathbf{x})
  \big],
\end{equation}
with normalization factors
\begin{equation}
  Z_p(\mathbf{x})
  =
  \E_{\mathbf{y}^+ \sim p}
  \big[
  k(\mathbf{x}, \mathbf{y}^+)
  \big],
  \qquad
  Z_q(\mathbf{x})
  =
  \E_{\mathbf{y}^- \sim q}
  \big[
  k(\mathbf{x}, \mathbf{y}^-)
  \big],
\end{equation} where the default kernel $k$ is the Laplace kernel. 

\paragraph{Training objective.}
The training loss drives $f_\theta$ toward the drifted target via a stop-gradient:
\begin{equation}
  \cL=\E_{\boldsymbol{\epsilon}}\Big[\big\|f_\theta(\boldsymbol{\epsilon})-
  \texttt{stopgrad}
  \big(f_\theta(\boldsymbol{\epsilon})+\mathbf{V}_{p,q}(f_\theta(\boldsymbol{\epsilon}))\big)\big\|^2\Big].
  \label{eq:training_loss}
\end{equation}

In practice, the field is evaluated in the feature space of a frozen encoder, and the expectations in
$\mathbf{V}_{p,q}$ are estimated empirically during training. The main computational bottleneck is therefore not the loss itself, but the repeated evaluation of $\mathbf{V}_{p,q}$ through normalized kernel interactions between generated samples and training samples. \method\ targets this bottleneck by replacing exact kernel evaluations with projected evaluations that can reuse precomputed statistics from the training data.

\section{DriftXpress: RKHS Projected Drifting Fields}
\label{sec:method}

We introduce \method, a projected RKHS formulation of drifting fields. The field is written as a difference of normalized kernel-weighted averages, and scalability comes from replacing exact kernel sections with projections onto a landmark-induced subspace. We implement this projection with the Nystr\"om approximation, which expresses the kernel through evaluations against a small landmark set and yields the compact summaries used by \method{}.

\subsection{RKHS Projection via Nystr\"om Landmarks}

Let $\mathcal{H}_k$ denote the RKHS associated with the kernel $k$, and let
\[
\mathcal{H}_{\mathbf{U}} := \mathrm{span}\{k(\mathbf{u}_1,\cdot),\dots,k(\mathbf{u}_r,\cdot)\} \subset \mathcal{H}_k
\]
be the landmark-induced subspace associated with a set of landmarks $\mathbf{U}=\{\mathbf{u}_1,\dots,\mathbf{u}_r\}\subset\R^D$. Landmarks are selected from the training samples. We sample landmarks uniformly at random within each class. The resulting landmarks are fixed before training and define the subspace $\mathcal H_{\mathbf U}$. We study alternative landmark-selection rules in Section~\ref{sec:landmark_selection}.

DriftXpress approximates each kernel section $k(\mathbf{x},\cdot)$ by its projection onto $\mathcal{H}_{\mathbf{U}}$.
In practice, this projection is implemented through the Nystr\"om feature map.
Define the landmark kernel matrix $\mathbf{K}_{UU}\in\R^{r\times r}$ with entries
\[
[\mathbf{K}_{UU}]_{ab}=k(\mathbf{u}_a,\mathbf{u}_b).
\]
For any point $\mathbf{z}\in\R^D$, define
\[
\mathbf{K}_{\mathbf{z}U}
=
\begin{bmatrix}
k(\mathbf{z},\mathbf{u}_1),\dots,k(\mathbf{z},\mathbf{u}_r)
\end{bmatrix}
\in \R^{1\times r}.
\]
The Nystr\"om feature map is\footnote{For numerical stability, $\mathbf{K}_{UU}^{-1/2}$ is replaced by $(\mathbf{K}_{UU}+\lambda \mathbf{I})^{-1/2}$}
\begin{equation}
  \boldsymbol{\varphi}(\mathbf{z})
  :=
  \left(\mathbf{K}_{\mathbf{z}U}\mathbf{K}_{UU}^{-1/2}\right)^\top
  \in \R^r.
  \label{eq:nystrom_features}
\end{equation}
For any two points $\mathbf{z},\mathbf{z}'\in\R^D$, the Nystr\"om features approximate their kernel similarity through the landmark-based low-rank kernel
\begin{equation}
  k_{\mathbf{U}}(\mathbf{z},\mathbf{z}')
  :=
  \boldsymbol{\varphi}(\mathbf{z})^\top \boldsymbol{\varphi}(\mathbf{z}')
  \approx
  k(\mathbf{z},\mathbf{z}').
  \label{eq:nystrom_kernel}
\end{equation}
Thus, instead of evaluating the full kernel $k$, DriftXpress works with the low-rank kernel $k_{\mathbf{U}}$ associated with the landmark subspace $\mathcal{H}_{\mathbf{U}}$.

\subsection{RKHS Projected Approximation of the Drifting Field}

The attractive component of the drifting field can be written in terms of a normalized kernel-weighted mean map:
\begin{equation}
  \mu_p(\mathbf{x})
  :=
  \frac{\sum_{j=1}^{N^+} k(\mathbf{x},\mathbf{y}_j^+)\,\mathbf{y}_j^+}
       {\sum_{j=1}^{N^+} k(\mathbf{x},\mathbf{y}_j^+)},
  \qquad
  \mathbf{V}_p^+(\mathbf{x}) = \mu_p(\mathbf{x}) - \mathbf{x}.
\end{equation}
DriftXpress replaces $k$ by the projected kernel $k_{\mathbf{U}}$, yielding
\begin{equation}
  \mu_p^{\mathbf{U}}(\mathbf{x})
  :=
  \frac{\sum_{j=1}^{N^+} \boldsymbol{\varphi}(\mathbf{x})^\top \boldsymbol{\varphi}(\mathbf{y}_j^+)\,\mathbf{y}_j^+}
       {\sum_{j=1}^{N^+} \boldsymbol{\varphi}(\mathbf{x})^\top \boldsymbol{\varphi}(\mathbf{y}_j^+)},
  \qquad
  \mathbf{V}_{p,\mathbf{U}}^+(\mathbf{x}) = \mu_p^{\mathbf{U}}(\mathbf{x}) - \mathbf{x}.
\end{equation}
Since $\boldsymbol{\varphi}(\mathbf{x})^\top$ does not depend on $j$, we can factor it out and precompute the following \emph{summaries}:
\begin{equation}
  \mathbf{A}_p := \sum_{j=1}^{N^+} \boldsymbol{\varphi}(\mathbf{y}_j^+)\,\mathbf{y}_j^{+\top}
  \;\in\; \R^{r\times D},
  \qquad
  \mathbf{b}_p := \sum_{j=1}^{N^+} \boldsymbol{\varphi}(\mathbf{y}_j^+)
  \;\in\; \R^r.
  \label{eq:summaries_p}
\end{equation}
This gives the compact expression
\begin{equation}
  \mu_p^{\mathbf{U}}(\mathbf{x})
  =
  \frac{\mathbf{A}_p^\top \boldsymbol{\varphi}(\mathbf{x})}
       {\boldsymbol{\varphi}(\mathbf{x})^\top \mathbf{b}_p+\varepsilon}.
  \label{eq:projected_positive_mean_compact}
\end{equation}
The denominator is stabilized in implementation by adding a positive offset $\varepsilon > 0$ before division. For the repulsive component, \method{} keeps the original exact kernel-weighted mean:
\begin{equation}
  \mu_q(\mathbf{x})
  :=
  \frac{\sum_{j=1}^{N^-} k(\mathbf{x}, \mathbf{y}_j^-)\mathbf{y}_j^-}
       {\sum_{j=1}^{N^-} k(\mathbf{x}, \mathbf{y}_j^-)+\varepsilon} ,
  \qquad
  \mathbf{V}_{q}^-(\mathbf{x}) = \mu_q(\mathbf{x}) - \mathbf{x}.
  \label{eq:exact_repulsive_mean}
\end{equation}
Here $\{\mathbf{y}_j^-\}_{j=1}^{N^-}$ denotes the generated samples from the current model distribution. In practice, this set is the generated batch at the current training step, with self-interactions removed when the query point is also part of the repulsive support. Therefore, the \method{} drifting field uses projected attraction and exact repulsion:
\begin{equation}
  \mathbf{V}_{p,q}^{\mathbf{U}}(\mathbf{x})
  \;:=\;
  \mu_p^{\mathbf{U}}(\mathbf{x}) - \mu_q(\mathbf{x})
  \;=\;
  \underbrace{\frac{\mathbf{A}_p^\top \boldsymbol{\varphi}(\mathbf{x})}
                    {\boldsymbol{\varphi}(\mathbf{x})^\top \mathbf{b}_p + \varepsilon}}_{\mu_p^{\mathbf{U}}(\mathbf{x})}
  -
  \underbrace{\frac{\sum_{j=1}^{N^-} k(\mathbf{x}, \mathbf{y}_j^-)\mathbf{y}_j^-}
                    {\sum_{j=1}^{N^-} k(\mathbf{x}, \mathbf{y}_j^-)+ \varepsilon}}_{\mu_q(\mathbf{x})}.
  \label{eq:nystrom_V}
\end{equation}
As in the exact formulation, the $\mathbf{x}$ terms in $\mathbf{V}_{p,\mathbf{U}}^+(\mathbf{x}) = \mu_p^{\mathbf{U}}(\mathbf{x})-\mathbf{x}$ and $\mathbf{V}_{q}^-(\mathbf{x})=\mu_q(\mathbf{x})-\mathbf{x}$ cancel exactly, so the implemented field is determined by a projected attractive field and an exact repulsive field. \method{} replaces the attraction term it with reusable summaries $\mathbf{A}_p$ and $\mathbf{b}_p$. These summaries can now also be computed on the entire dataset prior to training, and, as we will see in Section~\ref{sec:experiments}, they help \method{} achieve a holistic view of the data, resulting in a much faster convergence rate. The repulsive term is computed on the current generated batch, so it cannot be precomputed before training, unlike the attractive training-data summaries. Since approximating it would not remove the main offline-reusable bottleneck, we keep this term exact.

\subsection{Compositional Summaries}

The formulation above assumes that the support can be represented by a single set of Nystr\"om summaries. At sufficiently large scale, this may become infeasible: even if the per-step batch size is small, the full support may not fit into a single summary tensor. To address this, we use the compositionality of the projected field by partitioning the support into $S$ shards and maintaining a separate Nystr\"om summary for each shard.

\noindent Let shard $s\in\{1,\dots,S\}$ have a landmark set $\mathbf U^{(s)}$ and feature map
$\boldsymbol{\varphi}^{(s)}(\mathbf x)$. We partition the positive training samples across shards.
Let $\mathcal I_p^{(s)}$ denote the indices of positive samples assigned to shard $s$. The shard-local attractive summaries are
\begin{equation}
  \mathbf A_p^{(s)}
  :=
  \sum_{j\in\mathcal I_p^{(s)}} 
  \boldsymbol{\varphi}^{(s)}(\mathbf y_j^+) \mathbf y_j^{+\top},
  \qquad
  \mathbf b_p^{(s)}
  :=
  \sum_{j\in\mathcal I_p^{(s)}} 
  \boldsymbol{\varphi}^{(s)}(\mathbf y_j^+).
\end{equation}
The projected attractive mean is obtained by summing shard contributions:
\begin{equation}
  \mu_{p,\mathrm{comp}}(\mathbf x)
  =
  \frac{\sum_{s=1}^S 
  \mathbf A_p^{(s)\top}\boldsymbol{\varphi}^{(s)}(\mathbf x)}
       {\sum_{s=1}^S 
  \boldsymbol{\varphi}^{(s)}(\mathbf x)^\top \mathbf b_p^{(s)}+\varepsilon}.
  \label{eq:compositional_positive_mean}
\end{equation}
This additive form is the key reason for the possibility of sharding. Each shard contributes only a numerator and denominator term, and these terms can be accumulated sequentially without materializing the full positive summary on the device at once. Therefore, the compositional \method{} field is
\begin{equation}
  \mathbf V_{p,q}^{\mathrm{comp}}(\mathbf x)
  :=
  \mu_{p,\mathrm{comp}}(\mathbf x)-\mu_q(\mathbf x).
  \label{eq:compositional_drift_field}
\end{equation}

A key advantage of this formulation is that both the numerator and denominator decompose additively across shards. Therefore, the field can be evaluated without materializing all summaries on the device at once: shard contributions can be accumulated sequentially, so memory scales with the largest shard rather than the full support.

\begin{theorem}[Compositionality of attractive summaries]
Let $\bar{\boldsymbol{\varphi}}(\mathbf x)
:=
\big[
\boldsymbol{\varphi}^{(1)}(\mathbf x);
\dots;
\boldsymbol{\varphi}^{(S)}(\mathbf x)
\big]$
denote the concatenation of the shard feature maps, and let
\[
\bar{\mathbf A}_p
:=
\begin{bmatrix}
\mathbf A_p^{(1)}\\
\vdots\\
\mathbf A_p^{(S)}
\end{bmatrix},
\qquad
\bar{\mathbf b}_p
:=
\begin{bmatrix}
\mathbf b_p^{(1)}\\
\vdots\\
\mathbf b_p^{(S)}
\end{bmatrix},
\]
denote the concatenations of the $S$ shard feature maps and attractive summaries, respectively. Then the compositional attractive mean satisfies
\[
\mu_{p,\mathrm{comp}}(\mathbf x) = \frac{\bar{\mathbf A}_p^\top \bar{\boldsymbol{\varphi}}(\mathbf x)}{\bar{\boldsymbol{\varphi}}(\mathbf x)^\top \bar{\mathbf b}_p +\varepsilon}.
\]
Consequently, given the same repulsive mean $\mu_q(\mathbf x)$, the compositional field $\mathbf V_{p,q}^{\mathrm{comp}}(\mathbf x) = \mu_{p,\mathrm{comp}}(\mathbf x)-\mu_q(\mathbf x)$ is identical to the field obtained using a single concatenated attractive summary.
\end{theorem}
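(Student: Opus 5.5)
The plan is to verify the identity by block matrix multiplication, treating the numerator and the denominator of \eqref{eq:compositional_positive_mean} separately.

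First I fix the dimensions. Shard $s$ has $r_s$ landmarks, so $\boldsymbol{\varphi}^{(s)}(\mathbf x)\in\R^{r_s}$, $\mathbf A_p^{(s)}\in\R^{r_s\times D}$ and $\mathbf b_p^{(s)}\in\R^{r_s}$. The concatenations then satisfy $\bar{\boldsymbol{\varphi}}(\mathbf x),\bar{\mathbf b}_p\in\R^{R}$ and $\bar{\mathbf A}_p\in\R^{R\times D}$, where $R=\sum_s r_s$. The row partitions of $\bar{\mathbf A}_p$ and $\bar{\mathbf b}_p$ are conformal with the block partition of $\bar{\boldsymbol{\varphi}}(\mathbf x)$. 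This conformality is the only point that needs care; it holds because each summary $\mathbf A_p^{(s)}$ is built from the same map $\boldsymbol{\varphi}^{(s)}$ that appears in the $s$-th block of $\bar{\boldsymbol{\varphi}}$.

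For the numerator, transposition turns the vertical stacking into a horizontal one:
\[
\bar{\mathbf A}_p^\top=\begin{bmatrix}\mathbf A_p^{(1)\top}&\cdots&\mathbf A_p^{(S)\top}\end{bmatrix}.
\]
Block multiplication with $\bar{\boldsymbol{\varphi}}(\mathbf x)$ therefore gives $\bar{\mathbf A}_p^\top\bar{\boldsymbol{\varphi}}(\mathbf x)=\sum_{s=1}^S\mathbf A_p^{(s)\top}\boldsymbol{\varphi}^{(s)}(\mathbf x)$. For the denominator, the inner product of two conformally partitioned vectors is the sum of the blockwise inner products, so $\bar{\boldsymbol{\varphi}}(\mathbf x)^\top\bar{\mathbf b}_p=\sum_{s=1}^S\boldsymbol{\varphi}^{(s)}(\mathbf x)^\top\mathbf b_p^{(s)}$. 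Adding the same $\varepsilon$ to this denominator and substituting both identities into \eqref{eq:compositional_positive_mean} yields the claimed formula for $\mu_{p,\mathrm{comp}}(\mathbf x)$.

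For the consequence, note that $\mathbf V_{p,q}^{\mathrm{comp}}$ in \eqref{eq:compositional_drift_field} depends on the attractive part only through $\mu_{p,\mathrm{comp}}(\mathbf x)$. The single-summary field with feature map $\bar{\boldsymbol{\varphi}}$ and summaries $\bar{\mathbf A}_p,\bar{\mathbf b}_p$ uses the same expression, following the form of \eqref{eq:projected_positive_mean_compact}. Subtracting the same $\mu_q(\mathbf x)$ from both therefore gives identical fields.

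It may also help to note that $\bar{\mathbf A}_p=\sum_j\tilde{\boldsymbol{\varphi}}(\mathbf y_j^+)\mathbf y_j^{+\top}$, where $\tilde{\boldsymbol{\varphi}}(\mathbf y_j^+)$ is $\bar{\boldsymbol{\varphi}}(\mathbf y_j^+)$ with every block zeroed except that of the shard owning $j$. This explains in what sense the concatenated summary is a single summary, but the theorem does not require it.
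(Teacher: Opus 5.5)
Your proof is correct and follows the paper's argument: expand $\bar{\mathbf A}_p^\top\bar{\boldsymbol{\varphi}}(\mathbf x)$ and $\bar{\boldsymbol{\varphi}}(\mathbf x)^\top\bar{\mathbf b}_p$ blockwise into the shard sums, substitute these into the compositional mean, and note that sharding leaves the exact repulsive term unchanged. Two refinements go slightly beyond the paper. You carry the $\varepsilon$ offset explicitly, which the appendix restatement drops. You also observe that $\bar{\mathbf A}_p$ is built from masked features rather than from $\bar{\boldsymbol{\varphi}}(\mathbf y_j^+)$ over all positives, which is accurate.
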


 We next provide the approximation guarantee that connects landmark quality to field quality. The key point is that the error in the projected drifting field is controlled by how well the projected kernel approximates the exact kernel interactions. Full statements and proofs are deferred to App.~\ref{app:proofs}.

\begin{theorem}[Local distortion of the projected attractive field]
\label{thm:local_field_distortion_main}
Let $\{\mathbf y_j^+\}_{j=1}^{N^+}\subset\R^D$ satisfy
$\|\mathbf y_j^+\|\le R$, and let $k$ be a nonnegative kernel. Let
$k_{\mathbf U}$ be the projected kernel induced by landmarks $\mathbf U$.
For a query $\mathbf x$, define the kernel residual vector
\[
  \mathbf r_{\mathbf U}(\mathbf x)
  :=
  \big(
  k(\mathbf x,\mathbf y_1^+) - k_{\mathbf U}(\mathbf x,\mathbf y_1^+),
  \dots,
  k(\mathbf x,\mathbf y_{N^+}^+) - k_{\mathbf U}(\mathbf x,\mathbf y_{N^+}^+)
  \big)^\top
  \in \R^{N^+},
\]
and the exact positive normalizer $
  d_p(\mathbf x)
  :=
  \frac{1}{N^+}\sum_{j=1}^{N^+} k(\mathbf x,\mathbf y_j^+).$
Assume $d_p(\mathbf x)>0$. If $
  \|\mathbf r_{\mathbf U}(\mathbf x)\|_2
  \le
  \frac{\sqrt{N^+}\,d_p(\mathbf x)}{2},$
then
\[
  \big\|
  \mathbf V_{p,\mathbf U}^+(\mathbf x)-\mathbf V_p^+(\mathbf x)
  \big\|
  \le
  \frac{4R}{\sqrt{N^+}\,d_p(\mathbf x)}
  \|\mathbf r_{\mathbf U}(\mathbf x)\|_2.
\]
\end{theorem}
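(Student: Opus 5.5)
Since $\mathbf V_{p,\mathbf U}^+(\mathbf x)-\mathbf V_p^+(\mathbf x)=\mu_p^{\mathbf U}(\mathbf x)-\mu_p(\mathbf x)$ (the $-\mathbf x$ terms cancel), I will bound the difference of two normalized weighted means. I take the statement with $\varepsilon=0$, as in the definition of $\mu_p^{\mathbf U}$ preceding \eqref{eq:projected_positive_mean_compact}. Write $w_j=k(\mathbf x,\mathbf y_j^+)\ge 0$, $\tilde w_j=k_{\mathbf U}(\mathbf x,\mathbf y_j^+)$, and $r_j=w_j-\tilde w_j$. Set $n=\frac1{N^+}\sum_j w_j\mathbf y_j^+$, $d=d_p(\mathbf x)$, and define $\tilde n,\tilde d$ analogously with $\tilde w_j$. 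Then $\mu_p=n/d$ and $\mu_p^{\mathbf U}=\tilde n/\tilde d$, where the common factor $1/N^+$ cancels in each ratio.

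\textbf{Step 1 (denominator stays positive).} By Cauchy--Schwarz,
\[
|d-\tilde d|=\tfrac1{N^+}\Big|\sum_j r_j\Big|\le \tfrac{1}{\sqrt{N^+}}\|\mathbf r_{\mathbf U}(\mathbf x)\|_2\le d/2
\]
under the hypothesis. Hence $\tilde d\ge d/2>0$, so $\mu_p^{\mathbf U}$ is well defined. This is the only place the smallness assumption is used, and it is the step I consider essential.

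\textbf{Step 2 (numerator perturbation).} In the same way,
\[
\|n-\tilde n\|=\tfrac1{N^+}\Big\|\sum_j r_j\mathbf y_j^+\Big\|\le \tfrac{R}{N^+}\sum_j|r_j|\le \tfrac{R}{\sqrt{N^+}}\|\mathbf r_{\mathbf U}\|_2.
\]
Also $\|n/d\|\le R$, because $\mu_p$ is a convex combination of the $\mathbf y_j^+$; this uses $w_j\ge0$, which is where nonnegativity of $k$ enters. I do not need any sign control on $\tilde w_j$.

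\textbf{Step 3 (combine).} Use the decomposition
\[
\frac{\tilde n}{\tilde d}-\frac nd=\frac{(\tilde n-n)+\frac{n}{d}\,(d-\tilde d)}{\tilde d}.
\]
Its norm is at most
\[
\frac{2}{d}\Big(\frac{R}{\sqrt{N^+}}\|\mathbf r_{\mathbf U}\|_2+R\cdot\frac{1}{\sqrt{N^+}}\|\mathbf r_{\mathbf U}\|_2\Big)=\frac{4R}{\sqrt{N^+}\,d}\,\|\mathbf r_{\mathbf U}\|_2,
\]
which is exactly the claimed bound.

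The only delicate point is choosing the decomposition in Step 3 so that the factor $\|n/d\|\le R$ appears, rather than $\|\tilde n/\tilde d\|$, which is uncontrolled because the $\tilde w_j$ may be negative. If the $\varepsilon$ offset is retained in the denominator, the same argument still gives a bound, with $\tilde d+\varepsilon\ge d/2$. However, it adds an extra bias term proportional to $\varepsilon$, so the clean constant $4R$ holds only for the $\varepsilon=0$ form stated above.
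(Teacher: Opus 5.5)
Your proof is correct and follows the paper's argument essentially line for line: the same $\ell_1$-to-$\ell_2$ Cauchy--Schwarz bounds on the numerator and denominator perturbations, the same add-and-subtract of $\mathbf n_p/d_{p,\mathbf U}$ so that the controlled factor $\|\mathbf n_p/d_p\|\le R$ appears, and the same lower bound $d_{p,\mathbf U}\ge d_p/2$. Your choice $\varepsilon=0$ also matches the appendix version of the theorem, which defines $\mu_p^{\mathbf U}$ without the offset.
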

\begin{corollary}[On-support distortion]
\label{cor:on_support_field_distortion_main}
Let $\mathbf K,\mathbf K_{\mathbf U}\in\R^{N^+\times N^+}$ be the exact and projected Gram matrices on
$\{\mathbf y_j^+\}_{j=1}^{N^+}$, with entries $[\mathbf K]_{ij}=k(\mathbf y_i^+,\mathbf y_j^+)$ and 
  $[\mathbf K_{\mathbf U}]_{ij}=k_{\mathbf U}(\mathbf y_i^+,\mathbf y_j^+).$
Define $\kappa_{\min}:=\min_{1\le i\le N^+} \frac{1}{N^+}\sum_{j=1}^{N^+} k(\mathbf y_i^+,\mathbf y_j^+)$. If $\|\mathbf K-\mathbf K_{\mathbf U}\|_{2,\infty}\le\frac{\sqrt{N^+}\,\kappa_{\min}}{2},$ then
\[
  \frac{1}{N^+}\sum_{i=1}^{N^+}
  \big\|
  \mathbf V_{p,\mathbf U}^+(\mathbf y_i^+)
  -
  \mathbf V_p^+(\mathbf y_i^+)
  \big\|^2
  \le
  \frac{16R^2}{{N^+}^2\kappa_{\min}^2}
  \|\mathbf K-\mathbf K_{\mathbf U}\|_F^2.
\]
\end{corollary}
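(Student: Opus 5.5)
The plan is to apply Theorem~\ref{thm:local_field_distortion_main} pointwise at each support point $\mathbf x=\mathbf y_i^+$, then average the squared bounds over $i$. The local result is stated for an arbitrary query $\mathbf x$ with the same support $\{\mathbf y_j^+\}$ and radius $R$, so it can be invoked directly. The work lies in matching its hypotheses and its residual vector to the Gram-matrix quantities in the corollary.

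First, identify the residual vector at $\mathbf x=\mathbf y_i^+$. By definition,
\[
\mathbf r_{\mathbf U}(\mathbf y_i^+)=\big(k(\mathbf y_i^+,\mathbf y_j^+)-k_{\mathbf U}(\mathbf y_i^+,\mathbf y_j^+)\big)_{j=1}^{N^+},
\]
which is exactly the $i$-th row of $\mathbf K-\mathbf K_{\mathbf U}$. Hence $\|\mathbf r_{\mathbf U}(\mathbf y_i^+)\|_2$ is the $i$-th row norm. I will read $\|\cdot\|_{2,\infty}$ as the maximum row $\ell_2$ norm. Since both Gram matrices are symmetric, the row-versus-column convention does not matter.

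Second, identify the normalizer: $d_p(\mathbf y_i^+)=\frac1{N^+}\sum_j k(\mathbf y_i^+,\mathbf y_j^+)\ge\kappa_{\min}$. For $\kappa_{\min}>0$ I will note that, for the Laplace kernel (or any nonnegative kernel with $k(\mathbf y,\mathbf y)>0$), the diagonal term already gives $d_p(\mathbf y_i^+)\ge k(\mathbf y_i^+,\mathbf y_i^+)/N^+>0$. I will also mention that if $\kappa_{\min}=0$ the statement is vacuous or needs this assumption.

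Third, check the hypothesis of the local theorem:
\[
\|\mathbf r_{\mathbf U}(\mathbf y_i^+)\|_2\le\|\mathbf K-\mathbf K_{\mathbf U}\|_{2,\infty}\le \tfrac{\sqrt{N^+}\kappa_{\min}}2\le \tfrac{\sqrt{N^+}d_p(\mathbf y_i^+)}2.
\]
The local theorem then gives
\[
\|\mathbf V^+_{p,\mathbf U}(\mathbf y_i^+)-\mathbf V^+_p(\mathbf y_i^+)\|\le \frac{4R}{\sqrt{N^+}\,d_p(\mathbf y_i^+)}\|\mathbf r_{\mathbf U}(\mathbf y_i^+)\|_2\le\frac{4R}{\sqrt{N^+}\kappa_{\min}}\|(\mathbf K-\mathbf K_{\mathbf U})_{i,:}\|_2 .
\]

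Finally, square this bound and average over $i$:
\[
\frac1{N^+}\sum_i\|\cdot\|^2\le \frac1{N^+}\cdot\frac{16R^2}{N^+\kappa_{\min}^2}\sum_i\|(\mathbf K-\mathbf K_{\mathbf U})_{i,:}\|_2^2=\frac{16R^2}{{N^+}^2\kappa_{\min}^2}\|\mathbf K-\mathbf K_{\mathbf U}\|_F^2,
\]
using that the sum of squared row norms is the Frobenius norm.

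There is no real obstacle. The only delicate point is bookkeeping: making sure the $\|\cdot\|_{2,\infty}$ convention matches the row residual, and that replacing $d_p$ by its minimum $\kappa_{\min}$ is used in the correct direction. It must both strengthen the hypothesis check and weaken the bound, which it does, since $d_p\ge\kappa_{\min}$ appears in a denominator.
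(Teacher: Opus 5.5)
Your proposal is correct and matches the paper's proof: apply Theorem~\ref{thm:local_field_distortion_main} at $\mathbf x=\mathbf y_i^+$, identify $\mathbf r_{\mathbf U}(\mathbf y_i^+)$ with the $i$th row of $\mathbf K-\mathbf K_{\mathbf U}$, use $d_p(\mathbf y_i^+)\ge\kappa_{\min}$, then square, average, and sum the squared row norms to get $\|\mathbf K-\mathbf K_{\mathbf U}\|_F^2$. The paper leaves two details unstated that you spell out: the verification that the $\|\cdot\|_{2,\infty}$ condition implies the local hypothesis at every support point, and the observation that $\kappa_{\min}>0$ is implicitly required.
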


Theorem~\ref{thm:local_field_distortion_main} and
Corollary~\ref{cor:on_support_field_distortion_main} show that the error introduced
by projecting the attractive field is controlled by two quantities: the landmark
kernel residual $\|\mathbf r_{\mathbf U}(\mathbf x)\|_2$ and the local kernel
mass $d_p(\mathbf x)$. In particular, the local distortion scales as
$\|\mathbf r_{\mathbf U}(\mathbf x)\|_2/d_p(\mathbf x)$. This also quantifies when $q=p$,
\method{} evaluates to $\mu_p^{\mathbf U}(\mathbf x)-\mu_p(\mathbf x)$ rather than
zero, and the theorem bounds exactly this deviation.
Thus, the projected field is accurate when the landmark kernel approximates the exact kernel well for queries sufficiently similar to the data. The corollary also explains why landmark selection matters. On the training support, the average squared field distortion is controlled by the Gram approximation error
\(\|\mathbf K-\mathbf K_{\mathbf U}\|_F^2\). Landmark selection methods that represent high-density regions, such as global \(k\)-means and facility location, should therefore yield better field estimates than methods that optimize worst-case geometric spread. This matches the results in Table~\ref{tab:nystrom_landmark_selection_time}.



\begin{table*}[t]
\centering
\setlength{\tabcolsep}{2.5pt}
\caption{\textbf{Quality and efficiency statistics for standard drifting and DriftXpress.} Best FID is the lowest FID during training; speedup is measured relative to standard drifting on the same dataset. \method{} notably improves throughput compared to standard drifting.}
\label{tab:main_results_big}
\resizebox{\textwidth}{!}{
\begin{tabular}{llcccccc}
\toprule
\textbf{Dataset} & \textbf{Method} & \textbf{Landmarks / class} & \textbf{Best FID} $\downarrow$ & \textbf{Iter. Speed} $\uparrow$ & \textbf{Throughput} $\uparrow$ & \textbf{VRAM (GB)} $\downarrow$ & \textbf{Speedup} $\uparrow$ \\
\midrule
SVHN      & Standard Drift & - & \textbf{2.94 $\pm$ 0.21} & 0.21 it/s & 2,307 img/s & \textbf{59.4} &  \\
SVHN      & DriftXpress (Non-Sharded)     & 500 & 3.11 $\pm$ 0.12 & \textbf{1.38 it/s} & \textbf{15,402 img/s} & 73.1 & \textbf{6.68$\times$} \\
SVHN      & DriftXpress (Sharded)     & 500 & 3.15 $\pm$ 0.14 & 1.31 it/s & 14,713 img/s & 62.9 & 6.38$\times$ \\
\midrule
CIFAR10  & Standard Drift & - & 5.64 $\pm$ 0.16 & 0.18 it/s & 2,463 img/s & \textbf{71.5} &  \\
CIFAR10  & DriftXpress (Non-Sharded)     & 500 & 5.52 $\pm$ 0.06 & \textbf{1.20 it/s} & \textbf{16,342 img/s} & 74.6 & \textbf{6.63$\times$} \\
CIFAR10  & DriftXpress (Sharded)     & 500 & \textbf{5.14 $\pm$ 0.11} & 1.10 it/s & 14,985 img/s & 72.3 & 6.08$\times$ \\
\midrule
CIFAR100 & Standard Drift & - & 6.24 $\pm$ 0.11 & 0.23 it/s & 2,703 img/s & \textbf{63.4} &  \\
CIFAR100 & DriftXpress (Non-Sharded)    & 70 & 9.65 $\pm$ 0.10 & \textbf{0.87 it/s} & \textbf{10,143 img/s} & 74.5 & \textbf{3.75$\times$} \\
CIFAR100 & DriftXpress (Sharded)    & 200 & \textbf{6.15 $\pm$ 0.09} & 0.66 it/s & 7,979 img/s & 71.1 & 2.95$\times$ \\
\midrule
ImageNet & Standard Drift & - & \textbf{8.83 $\pm$ 0.13} & 0.15 it/s & 2,058 img/s & \textbf{65.7} &  \\
ImageNet & DriftXpress (Non-Sharded)    & $> 20$ & - &  - & - & OOM & - \\
ImageNet & DriftXpress (Sharded)    & 150 & 9.21 $\pm$ 0.16 & \textbf{0.40 it/s} & \textbf{5,453 img/s} & 78.1 & \textbf{2.64$\times$} \\
\bottomrule
\end{tabular}
}
\end{table*}

\begin{table*}[t]
\caption{\textbf{Landmark selection ablation on CIFAR10.} Best FID over 10k training steps and average selection time for different landmark selection methods. Most density-aware methods perform competitively, while $k$-center performs substantially worse.}
\centering
\small
\setlength{\tabcolsep}{4pt}
\resizebox{\textwidth}{!}{
\begin{tabular}{lcccccc}
\toprule
\multirow{2}{*}{Method} &
\multicolumn{3}{c}{Global} &
\multicolumn{3}{c}{Per-class} \\
\cmidrule(lr){2-4}
\cmidrule(lr){5-7}
&
Best FID $\downarrow$ (1280) &
Best FID $\downarrow$ (5120) &
Time (s) $\downarrow$ &
Best FID $\downarrow$ (128/class) &
Best FID $\downarrow$ (512/class) &
Time (s) $\downarrow$ \\
\midrule
Random &
11.80 $\pm$ 0.71 &
11.34 $\pm$ 2.24 &
\textbf{0.02} &
\textbf{11.82 $\pm$ 0.60} &
\textbf{9.60 $\pm$ 0.95} &
\textbf{0.06} \\

k-means &
\textbf{11.26 $\pm$ 0.89} &
\textbf{9.37 $\pm$ 0.37} &
7.39 &
12.91 $\pm$ 0.25 &
10.52 $\pm$ 0.13 &
2.70 \\

Facility location &
12.26 $\pm$ 1.43 &
9.49 $\pm$ 0.84 &
32.70 &
11.83 $\pm$ 1.11 &
10.57 $\pm$ 0.67 &
1.21 \\

Weighted k-center &
-- &
-- &
-- &
11.94 $\pm$ 1.09 &
10.31 $\pm$ 1.40 &
17.91 \\

k-center &
28.65 $\pm$ 2.69 &
23.78 $\pm$ 2.40 &
170.52 &
23.87 $\pm$ 1.11 &
13.91 $\pm$ 1.01 &
17.79 \\
\bottomrule
\end{tabular}
}
\label{tab:nystrom_landmark_selection_time}
\end{table*}

\section{Experiments}
\label{sec:experiments}
    \vspace{-0.75em}
\begin{figure}[t]
    \centering
    \includegraphics[width=\linewidth]{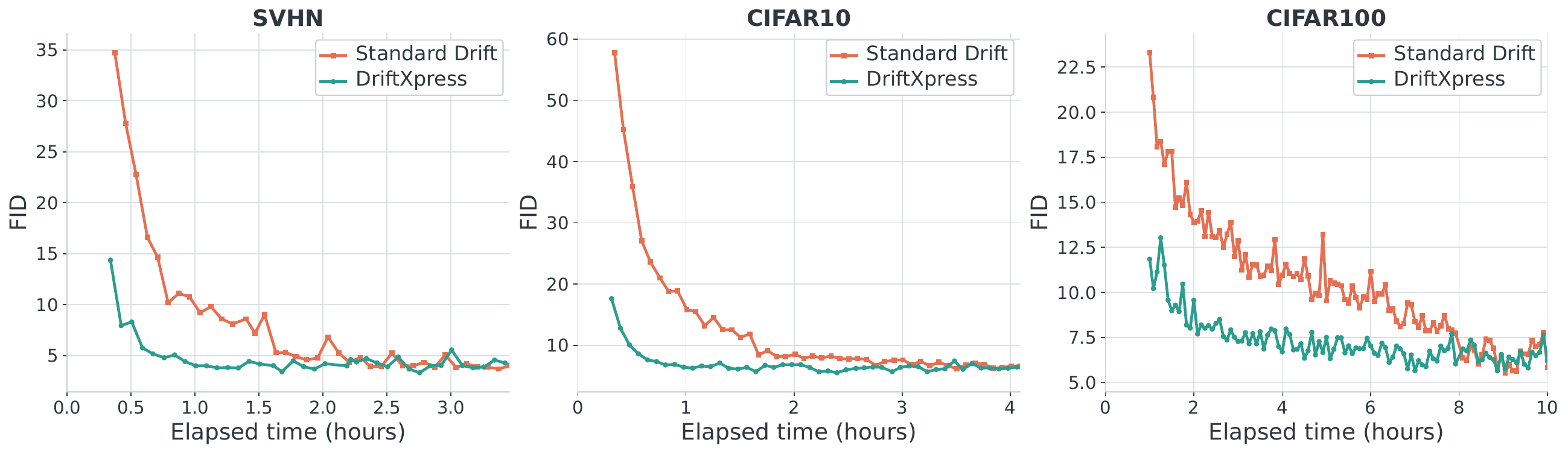}
    \caption{\textbf{FID over wall-clock training time.} FID trajectories for standard drifting and \method{} on SVHN, CIFAR10, and CIFAR100. \method{} reaches low-FID regimes faster in wall-clock time.}
\label{fig:wallclock}
\vspace{-1.25em}
\end{figure}

\begin{figure}[t]
    \centering
    \vspace{-0.75em}
    \includegraphics[width=\linewidth]{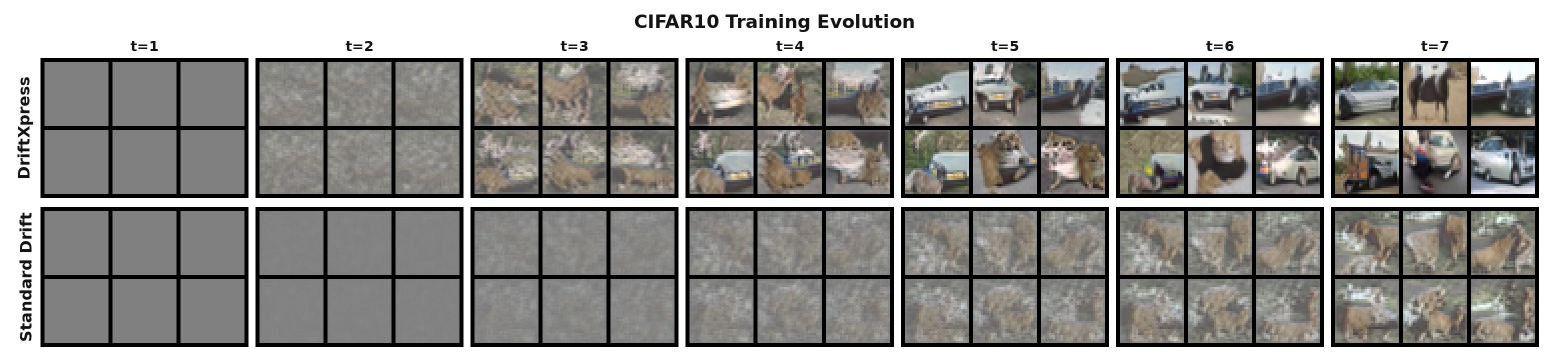}
\caption{\textbf{Early training snapshots on CIFAR10.}  Columns $t=1,\ldots,7$ denote matched training snapshot indices, with larger $t$ corresponding to later training steps. Both methods are sampled at the same training snapshots. DriftXpress reaches recognizable image structure substantially earlier, illustrating its faster training dynamics compared to Standard Drifting.}
\label{fig:speed}
\end{figure}

We evaluate DriftXpress along four axes. First, we compare its quality and efficiency against standard drifting across several image datasets. Second, we study wall-clock convergence to determine whether the projected field improves not only per-step speed, but also the rate at which FID decreases during training. Third, we isolate the effect of batch size on DriftXpress and the standard method. Finally, we ablate the landmark construction with respect to both the landmark selection strategy and the landmark ratio. The appendix provides additional experiments and implementation details. Specifically, we ablate exact versus projected repulsion (App.~\ref{app:exact-projected-repulsion}), verify that \method{} maintains one-step inference efficiency (App.~\ref{app:inference-time}), profile runtime and preprocessing gains (App.~\ref{app:cost-profile}), and evaluate the empirical and theoretical fidelity of the projected field (App.~\ref{app:field-fidelity}).


\subsection{Experimental setup} 
We compare DriftXpress against standard drifting by isolating the field estimator, keeping the generator architecture and objective identical. We evaluate sample quality primarily using Best FID, along with iteration speed, image throughput, wall-clock runtime, peak GPU memory usage, and relative speedup. 

All experiments employ a $38.3$M-parameter UNet generator~\citep{unet} and a frozen DINOv3 ViT-B/16~\citep{simeoni2025dinov3} feature encoder (required only during training). Results are reported as mean $\pm$ standard deviation across three seeds. The main benchmarks and ablations are run on 8 NVIDIA H100 GPUs and 1 NVIDIA H100 GPU, respectively. Rather than targeting computationally expensive state-of-the-art reproduction~\citep{deng2026drifting}, our controlled comparisons specifically evaluate whether RKHS-projected field estimation improves efficiency and scalability under a fixed protocol.



\subsection{Main Benchmark Results}
\label{sec:main_results}
We evaluate \method{}'s training efficiency against standard drifting under a fixed $8$-hour budget and global batch size of $12{,}000$ ($1500$/GPU). Table \ref{tab:main_results_big} details sample quality and efficiency across diverse datasets: SVHN~\citep{netzer2011reading}, CIFAR10/100~\citep{krizhevsky2009learning}, and ImageNet~\citep{deng2009imagenet} (statistics in Table \ref{tab:dataset-details}). We observe three key findings:

First, \method{} substantially increases throughput while preserving sample quality comparable to that of standard drifting. On SVHN, \method{} accelerates throughput by $6.68\times$ (from $2{,}307$ to $15{,}402$ images per second) while maintaining a comparable best FID of $3.11 \pm 0.12$ versus $2.94 \pm 0.21$. Similarly, on CIFAR10, we observe a $6.63\times$ throughput speedup, along with a slight improvement in the best FID ($5.52 \pm 0.06$ versus $5.64 \pm 0.16$). For datasets with larger class counts, the sharded variant of \method{} continues to deliver strong gains: yielding a $2.95\times$ speedup on CIFAR100 (FID $6.15 \pm 0.09$ vs.\ $6.24 \pm 0.11$) and a $2.64\times$ speedup on ImageNet (FID $9.21 \pm 0.16$ vs.\ $8.83 \pm 0.13$).

These performance gains demonstrate that modifying the training-time field estimator can drastically reduce computational overhead without altering the underlying drifting objective. While standard drifting computes exhaustive pairwise kernel interactions against sampled positive mini-batches at every step, \method{} evaluates the attractive field in a low-rank feature space using precomputed summaries of the entire training set. By replacing exact pairwise evaluations with these cached projections, \method{} efficiently exposes each generated batch to the full training support and allows for more optimization steps within the fixed wall-clock budget. We provide an online and offline cost breakdown in App.~\ref{app:cost-profile}.


Second, sharding is critical for ensuring that projected summaries remain memory-feasible at larger class counts. Because \method{} stores class-wise landmark summaries for the positive training support, unsharded memory costs scale rapidly with both class count and landmark budget. We address this by defining one shard per class ($10$ shards for SVHN/CIFAR10, $100$ for CIFAR100, and $1000$ for ImageNet). During training, the numerator and denominator of the attractive field are accumulated shard-wise. This prevents materializing a full, unsharded positive summary, thereby reducing the size of the active summary block used in each projection step while keeping all shards resident on the GPU.

Third, we observe that the sharding approach introduces a direct trade-off between memory and throughput. Without sharding, CIFAR100 runs out of memory for landmark budgets exceeding $70$ per class, and ImageNet consistently exceeds the capacity of an $80$GB GPU across all tested configurations. Sharding resolves these bottlenecks by evaluating class-wise summaries sequentially (see App.~\ref{app:shard}). However, when memory permits both variants, the unsharded approach is inherently faster as it evaluates a single projected summary. For instance, sharding on SVHN reduces peak VRAM from $73.1$GB to $62.9$GB, but drops throughput from $15{,}402$ to $14{,}713$ images/sec. We observe similar trends on CIFAR10 ($74.6$GB to $72.3$GB; $16{,}342 \to 14{,}985$ images/sec) and CIFAR100 ($74.5$GB to $71.1$GB; $10{,}143 \to 7{,}979$ images/sec). We conclude that while \method{} requires more overall memory than standard drifting to store landmarks and projected summaries, sharding provides a flexible mechanism to strictly bound this overhead.


\begin{figure}[t]
    \centering
    \includegraphics[width=\linewidth]{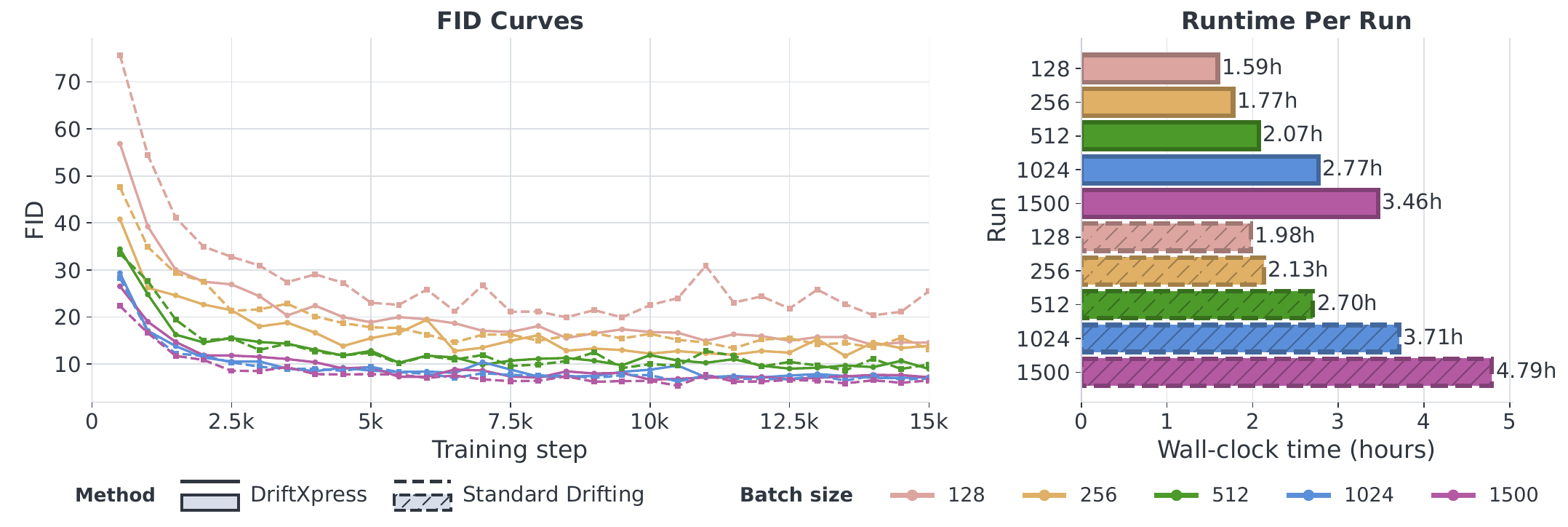}
\caption{\textbf{CIFAR10 batch-size sweep.} (left): FID trajectories for \method\ and standard drifting across batch sizes $128$--$1500$. (right): wall-clock runtime for each run. \method{} outperforms standard drifting across all batch sizes. For larger batch sizes, the primary benefit is runtime, whereas for small batches, the advantage is faster convergence and improved training trajectories.  }
\label{fig:cifar10-batch-size-sweep}
    \end{figure}

\begin{figure}[t]
    \vspace{-0.75em}
    \centering
    \includegraphics[width=\linewidth]{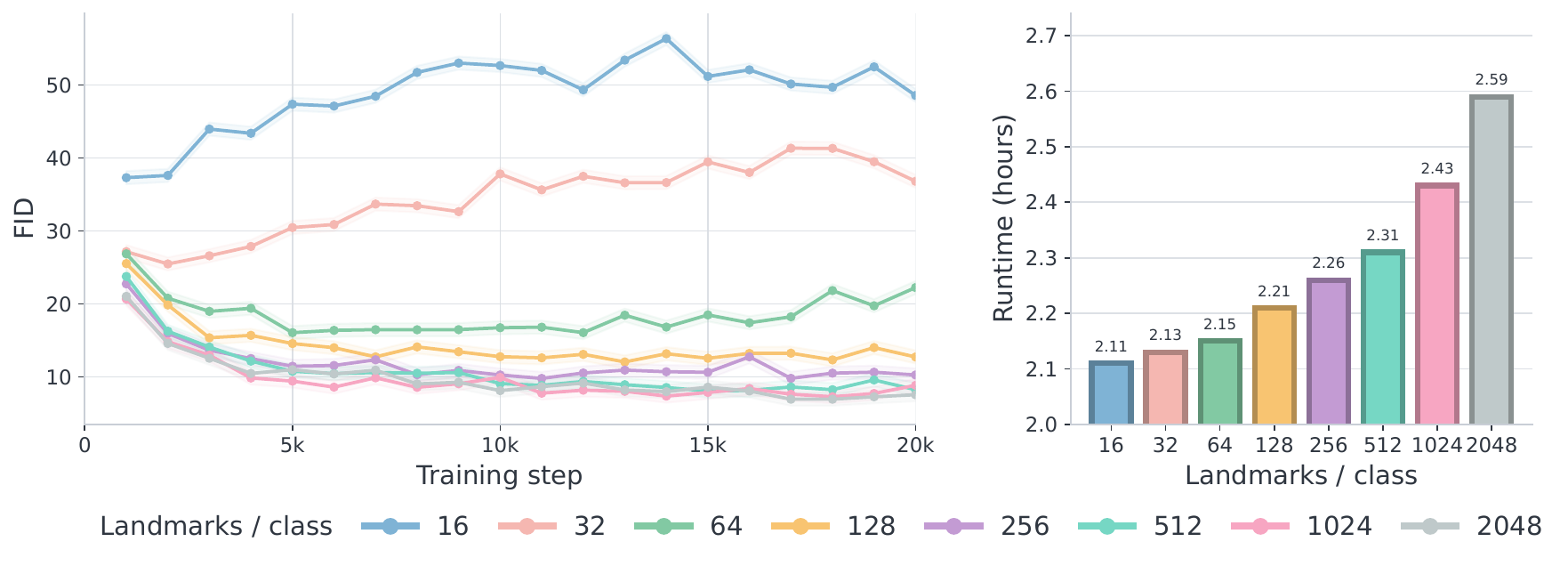}
    \caption{\textbf{Landmark ratio ablation.} (left) Training FID for different numbers of landmarks per class. (right) total training time. While higher landmark ratios improve FID, diminishing returns in sample quality against increasing runtimes suggest an optimal balance between $0.02$ and $0.1$.}
    \label{fig:landmark_quantity_ablation}
\vspace{-1.25em}
\end{figure}

\subsection{Wall-Clock Convergence}
\label{sec:wall_clock_convergence}
Figure~\ref{fig:wallclock} plots FID against elapsed training time for SVHN, CIFAR10 (both unsharded), and CIFAR100 (sharded). Across all three datasets, \method{} (green) improves the early- and mid-training FID trajectories relative to standard drifting (orange). This trend is also visible qualitatively in Figure~\ref{fig:speed}: at the same training snapshots, \method{} produces recognizable image structure much earlier than standard drifting. This is because the projected kernel summary uses a larger effective support set than the mini-batches used by standard drifting, yielding a smoother and more stable field estimate. Consequently, \method{} reaches low-FID regimes substantially faster in wall-clock time. The effect is most pronounced on SVHN and CIFAR10, while CIFAR100 has a more gradual trajectory due to its higher dataset complexity and use of the sharded variant. Overall, these trajectories support the main efficiency claim: the projected field not only reduces per-step cost, but also improves the rate at which drifting reaches competitive sample quality under a fixed compute budget.

\subsection{Batch-Size Sweep} Figure~\ref{fig:cifar10-batch-size-sweep} compares \method{} (solid lines) to standard drifting (dashed lines) across batch sizes from $128$ to $1500$. The left panel tracks CIFAR10 FID trajectories over $15{,}000$ steps, while the right panel reports end-to-end wall-clock runtimes. 


\method{} consistently outperforms standard drifting, with the specific advantage varying by regime. In large-batch regimes (typical of GPUs with larger memory), the primary benefit is computational efficiency. Because large batches naturally stabilize field estimation, both methods yield comparable trajectories, but \method{} is significantly faster: runtime drops by $25.3\%$ ($0.94$ hours) at batch size $1024$ (in blue), and by $27.7\%$ ($1.33$ hours) at $1500$ (in purple). This widening efficiency gap accounts for the major speedups observed in Table~\ref{tab:main_results_big}. Since the runtime delta grows from $0.39$ hours at batch size $128$ to $1.33$ hours at $1500$, extrapolating to our main benchmark's global batch size of $12{,}000$ is consistent with the observed $6.68\times$ maximum speedup. Conversely, in memory-constrained small-batch regimes, the primary advantage is faster convergence. For batch sizes of $128$ (in red) and $256$ (in orange), \method{} yields strictly better FID trajectories alongside moderate runtime reductions of $19.7\%$ and $17.0\%$. This superior convergence occurs because \method{} evaluates its attractive field against precomputed global summaries of the entire training set, maintaining a stable view of the landscape rather than relying on the limited, noisy positives of a small mini-batch.


\subsection{Landmark Selection Strategies} Table~\ref{tab:nystrom_landmark_selection_time} compares landmark selection strategies on CIFAR10, evaluated over $10$k training steps with a batch size of $1024$. We report the best FID scores for four primary methods: random sampling, $k$-means, $k$-center, and facility location (see App.~\ref{app:landmark-quality} for details). We evaluate these strategies across two variants: global (selected from the full training set) and per-class (selected equally within each class). These are tested at matched budgets of $1280$ total landmarks ($128$ per class) and $5120$ total landmarks ($512$ per class). Weighted $k$-center is evaluated per-class exclusively, as a global version would confound density weighting with class imbalance. 

\label{sec:landmark_selection}


We observe that most methods perform similarly (FID between $9 - 12$) with the notable exception of $k$-center, which significantly underperforms (FID between $14 - 29$). The overall best FID is achieved by global $k$-means, scoring $11.26 \pm 0.89$ and $9.37 \pm 0.37$ at the low and high budgets, respectively, while maintaining moderate computational cost. Facility location is competitive under the higher $ 5120$-budget ($9.49 \pm 0.84$ globally) but is computationally expensive. Meanwhile, random selection is substantially cheaper yet remains highly competitive, particularly in the per-class setting, where it achieves an FID of $9.60 \pm 0.95$ with $512$ landmarks per class. This aligns with prior Nystr\"om approximation findings by \citep{kumar2009sampling}, who demonstrated that uniform sampling without replacement provides effective approximations with minimal time and memory overhead. Because random selection offers the best balance of computational speed and competitive sample quality, we adopt random per-class landmarks as the default for all main experiments.

\subsection{Landmark Ratio}
\label{sec:landmark_quantity}
Figure~\ref{fig:landmark_quantity_ablation} evaluates the effect of landmark ratio in \method{} on CIFAR10 using random per-class landmark selection. Since CIFAR10 has $5000$ training samples per class, varying the number of landmarks per class from $16$ to $2048$ corresponds to landmark ratios from $0.0032$ to $0.4096$. The left panel reports FID trajectories over $20$k training steps and the right panel reports the corresponding end-to-end wall-clock runtime. We observe a clear quality-runtime trade-off. At the lowest ratio ($0.0032$), training is fast ($2.11$ hours) but unstable, ending with a poor FID of $48.57$. Conversely, the highest ratio ($0.4096$) yields a strong $7.57$ FID but increases the runtime to $2.59$ hours. This reflects the increased computational cost of query-landmark features and projected barycenters required for a larger, more accurate Nystr\"om approximation. 
However, quality gains show diminishing returns. Increasing the ratio to the $0.0256$--$0.1024$ range significantly improves FID, but further increases yield only marginal benefits given the computational overhead. Once the basis captures the distribution's dominant geometry, extra landmarks inflate the runtime. Consequently, the optimal trade-off lies between ratios of $0.0256$ and $0.1024$ ($128$--$512$ landmarks per class).

    \vspace{-0.35em}

\section{Conclusion}

    \vspace{-0.35em}


Drifting models replace iterative sampling with one-step generation, shifting computational burden to training. The main cost is field estimation: standard drifting repeatedly computes kernel interactions between generated samples and the support defining the attraction--repulsion field. \method{} addresses this by projecting attraction onto an RKHS subspace, thereby replacing repeated interactions with cached summaries. This provides a twofold speedup. First, wall-clock convergence improves because each projected step uses a full-dataset cached summary rather than mini-batch positive samples, yielding a smoother field that reaches low-FID regions faster. Second, it reduces the per-step field-estimation cost by replacing exact dataset attraction with query-landmark kernels and low-rank summary-multiplication. Across SVHN, CIFAR10, CIFAR100, and ImageNet, \method{} achieves comparable sample quality to standard drifting while substantially reducing training time.


The method has several limitations. Projecting attraction while keeping repulsion exact improves stability, but breaks exact anti-symmetry and leaves a quadratic batch-size dependence in the repulsive term. Our theory bounds the resulting projected-attraction error through the landmark kernel residual and local kernel mass, but safely approximating repulsion remains open. \method{} also assumes a fixed feature encoder, since changing the encoder would invalidate the cached summaries. These limitations point to generated-sample-aware repulsion approximations and more memory-efficient projected summaries as next steps.

\section*{Acknowledgments}

The authors thank Mingyang Deng for helpful discussions and clarifications regarding the original Drifting Models paper. Resources were provided, in part, by the Province of Ontario, the Government of Canada through CIFAR, and companies sponsoring the Vector Institute.

\bibliographystyle{alpha}
\bibliography{template/biblio}

\clearpage

\appendix
\section*{Appendix Contents}
\startcontents[appendix]
\printcontents[appendix]{}{1}{\setcounter{tocdepth}{2}}

\setcounter{theorem}{0}

\section{Related Work}

\paragraph{Diffusion and flow-based generative models.}
Diffusion models and score-based generative models learn to transform noise into data by reversing a gradual noising process or by solving a reverse-time stochastic differential equation~\citep{sohl2015deep,ho2020denoising,song2020score}. Flow-based generative models and stochastic interpolants instead learn deterministic or stochastic transport dynamics between simple and data distributions, often through ordinary differential equations or velocity fields~\citep{lipman2022flow,liu2022flow,albergo2023stochastic}. These methods have achieved strong sample quality, but their generation procedure requires many function evaluations. At inference time, generation is therefore tied to an iterative solver that repeatedly updates a sample along a learned trajectory. 

\paragraph{One-step and few-step generation.}
A large body of work has aimed to reduce the sampling cost of diffusion and flow-based models. Distillation-based methods compress a pretrained multi-step teacher into a generator that uses fewer sampling steps~\citep{salimans2022progressive,yin2024one,zhou2024score}. Consistency models train mappings that are consistent along the probability-flow trajectory and can generate in one or a few steps, either by distillation or standalone training~\citep{song2023consistency}. More recent approaches train one-step or few-step models from scratch by modifying the underlying flow or diffusion objective, for example, through shortcut models, mean flows, flow map matching, or moment-matching self-distillation~\citep{frans2024one,geng2025mean,boffi2024flow,zhou2025inductive}. These methods remain closely tied to diffusion or flow trajectories: they either distill a pretrained trajectory, enforce consistency along one, or learn a modified velocity/flow representation. 

\paragraph{Generative adversarial networks.}
Generative adversarial networks (GANs) train a generator and discriminator through a minimax objective~\citep{goodfellow2014generative}. Like GANs, Drifting Models use a generator that maps noise to samples in a single forward pass. However, the training signal is fundamentally different. GANs learn through adversarial discrimination, which can introduce optimization instability, mode collapse, and sensitivity to training dynamics~\citep{salimans2016improved,arjovsky2017towards,mescheder2018which}. Drifting Models avoid an adversarial discriminator and instead define an attraction--repulsion field whose equilibrium is reached when the generated and data distributions match~\citep{deng2026drifting}. 

\paragraph{Variational autoencoders and normalizing flows.}
Variational autoencoders (VAEs) optimize an evidence lower bound with an encoder-decoder architecture and a latent prior~\citep{kingma2013auto}. They provide efficient ancestral sampling from the prior, but their objective and inductive biases differ from drift-based training. Modern generative systems also often use autoencoding models as latent tokenizers or compression modules for diffusion or autoregressive models rather than as standalone high-fidelity generators~\citep{esser2021taming,rombach2022high}. Normalizing flows learn invertible transformations and optimize exact likelihood through change-of-variables formulas~\citep{rezende2015variational,dinh2016density}. They admit exact sampling and likelihood evaluation, but require architectural invertibility and tractable Jacobian determinants. 

\paragraph{Moment matching and kernel generative models.}
Kernel moment-matching methods compare distributions through discrepancies between kernel mean embeddings. Maximum Mean Discrepancy (MMD) measures the largest difference in expectations over the unit ball of an RKHS~\citep{gretton2012kernel}, and generative moment matching networks use MMD as a training objective for feedforward generators~\citep{dziugaite2015training,li2015generative}. More recent work has revisited moment matching for fast generative modeling, including one-step and few-step models~\citep{zhou2025inductive}. DriftXpress is related in its use of kernels and positive/negative samples, but the role of the kernel is different. Moment-matching methods usually define a scalar discrepancy between distributions. Drifting Models instead define a local vector field that moves each generated sample through attraction toward positive samples and repulsion from negative samples.

\paragraph{Kernel smoothing and mean-shift structure.}
The normalized attraction term in Drifting Models is closely related to classical kernel smoothing. In particular, the map
\[
  \boldsymbol{\mu}_p(\mathbf x)
  =
  \frac{\sum_j k(\mathbf x,\mathbf y_j^+)\mathbf y_j^+}
       {\sum_j k(\mathbf x,\mathbf y_j^+)}
\]
has the form of a Nadaraya--Watson kernel regression estimator, which predicts a locally weighted average using normalized kernel weights~\citep{nadaraya1964estimating,watson1964smooth}. In the drifting setting, the response values are the data features themselves, so this estimator defines a local barycenter that pulls a generated sample toward nearby data. This also connects drifting to mean shift, where kernel-weighted local means induce updates that move points toward modes of a kernel density estimate~\citep{cheng1995mean,comaniciu2002mean}. The distinction is that Drifting Models do not perform pure density-mode seeking: they combine attraction toward the data distribution with repulsion from the generated distribution, yielding a signed attraction--repulsion field.

\paragraph{RKHS embeddings, Nystr\"om approximation, and scalable kernel discrepancies.}
RKHS methods provide a classical way to study nonlinear learning problems through linear geometry in function space. This perspective underlies support vector machines, kernel PCA, kernel mean embeddings, MMD, and related nonparametric estimators~\citep{cortes1995support,scholkopf1998nonlinear,smola2007hilbert,gretton2012kernel,muandet2017kernel}. Kernel mean embeddings represent probability distributions as RKHS elements and support two-sample testing, independence testing, conditional mean embeddings, and distributional learning~\citep{smola2007hilbert,sriperumbudur2010hilbert,muandet2017kernel}. Since exact kernel methods often require large pairwise kernel matrices, a long line of work studies scalable approximations. The Nystr\"om method approximates a kernel matrix or operator using a subset of landmark columns~\citep{williams2001using,drineas2005nystrom}, and related ideas have been used to accelerate kernel discrepancy methods such as MMD and Kernel Stein Discrepancy~\citep{gretton2012kernel,chatalic2025scalable,liu2016kernelized,chwialkowski2016kernel,kalinke2024nystr}. DriftXpress is closest in spirit to these scalable kernel methods, but differs in its object of approximation: rather than accelerating a kernel classifier, eigendecomposition, or scalar discrepancy, it projects drifting kernel sections onto a landmark-induced RKHS subspace. This yields finite-dimensional features that allow the attractive field to be written using precomputed summaries over the training set, preserving the normalized kernel-field form of Drifting Models while reducing repeated exact interactions with all positive samples.

\paragraph{Recent work on Drifting Models.}
Drifting Models~\citep{deng2026drifting} have already prompted several follow-up studies that analyze, reinterpret, or extend the drifting field. A first line of work studies the mathematical structure of the field itself. \citep{turan2026generative} shows that, under a Gaussian kernel, the drift operator can be written as a score difference between smoothed distributions, connecting drifting to score matching, spectral convergence, and Wasserstein gradient-flow interpretations. Similarly, \citep{cao2026gradient} proves an equivalence between drifting and the Wasserstein gradient flow of a KDE-approximated forward KL divergence, and further relates MMD-based generators to the same broader family of KDE-approximated divergence flows. \citep{franz2026drifting} study whether drifting corresponds to optimizing a scalar potential, showing that drift fields are generally non-conservative due to the position-dependent normalization, while Gaussian kernels form a special conservative case. Complementary theoretical work studies identifiability and stability: \citep{lee2026identifiability} analyzes companion-elliptic kernel families, including the Laplace kernel, and \citep{kazanskii2026attraction} introduces a friction-augmented variant that addresses local instability in simplified drifting dynamics.

\section{Algorithm}

We provide the algorithms for \method{} preprocessing and training in Algorithms~\ref{alg:driftxpress-preprocess} and ~\ref{alg:driftxpress-train} respectively. 

\begin{algorithm}[htb]
\caption{DriftXpress: Preprocessing}
\label{alg:driftxpress-preprocess}
\begin{algorithmic}[1]
\REQUIRE Training data $\{\mathbf y^+_j\}_{j=1}^{N^+}$, landmark budget $r$, kernel $k$, regularizer $\lambda$
\ENSURE Landmarks $\mathbf U$, Nystr\"om transform $\mathbf W \in \mathbb{R}^{r \times r}$, attractive summaries $\mathbf A_p \in \mathbb{R}^{r \times D}$, $\mathbf b_p \in \mathbb{R}^{r}$

\STATE \textbf{// Landmark selection}
\STATE Sample landmarks $\mathbf U = \{\mathbf u_1, \ldots, \mathbf u_r\}$ from $\{\mathbf y^+_j\}_{j=1}^{N^+}$ \COMMENT{e.g., random per-class sampling}

\STATE \textbf{// Nystr\"om transform}
\STATE Compute landmark Gram matrix $[\mathbf K_{UU}]_{ab} \leftarrow k(\mathbf u_a, \mathbf u_b)$ for $a,b \in [r]$
\STATE Compute $\mathbf W \leftarrow (\mathbf K_{UU} + \lambda \mathbf I)^{-1/2}$ \COMMENT{Stored once; used throughout training}

\STATE \textbf{// Precompute attractive summaries over the full training set}
\STATE Initialize $\mathbf A_p \leftarrow \mathbf{0}^{r \times D}$, $\mathbf b_p \leftarrow \mathbf{0}^{r}$
\FOR{$j = 1, \ldots, N^+$}
    \STATE $\mathbf K_{\mathbf y^+_j U} \leftarrow [k(\mathbf y^+_j, \mathbf u_1), \ldots, k(\mathbf y^+_j, \mathbf u_r)] \in \mathbb R^{1\times r}$
    \STATE $\boldsymbol{\varphi}(\mathbf y^+_j) \leftarrow \mathbf W^\top \mathbf K_{\mathbf y^+_j U}^\top \in \mathbb{R}^{r}$
    \STATE $\mathbf A_p \leftarrow \mathbf A_p + \boldsymbol{\varphi}(\mathbf y^+_j)\, \mathbf y^{+\top}_j$
    \STATE $\mathbf b_p \leftarrow \mathbf b_p + \boldsymbol{\varphi}(\mathbf y^+_j)$
\ENDFOR

\STATE \RETURN $\mathbf U,\ \mathbf W,\ \mathbf A_p,\ \mathbf b_p$
\end{algorithmic}
\end{algorithm}

\begin{algorithm}[htb]
\caption{DriftXpress: Training}
\label{alg:driftxpress-train}
\begin{algorithmic}[1]
\REQUIRE Generator $f_\theta$, noise distribution $p_\epsilon$, precomputed $\mathbf U,\mathbf W,\mathbf A_p,\mathbf b_p$, kernel $k$, positive normalization offset $\varepsilon>0$, learning rate $\eta$, training steps $T$
\ENSURE Trained generator $f_\theta$

\FOR{$i = 1, \ldots, T$}

    \STATE \textbf{// Sample generated batch}
    \STATE Sample $\{\epsilon_b\}_{b=1}^B \sim p_\epsilon$
    \STATE $\mathbf x_b \leftarrow f_\theta(\epsilon_b)$ for $b \in [B]$ \COMMENT{Generated samples from the current model distribution}

    \STATE \textbf{// Compute projected attraction from cached training-data summaries}
    \FOR{$b = 1, \ldots, B$}
        \STATE $\mathbf K_{\mathbf x_b U} \leftarrow [k(\mathbf x_b, \mathbf u_1), \ldots, k(\mathbf x_b, \mathbf u_r)] \in \mathbb R^{1\times r}$
        \STATE $\boldsymbol{\varphi}_b \leftarrow \mathbf W^\top \mathbf K_{\mathbf x_b U}^\top \in \mathbb{R}^{r}$
        \STATE $\mu^U_p(\mathbf x_b) \leftarrow \dfrac{\mathbf A_p^\top \boldsymbol{\varphi}_b}{\boldsymbol{\varphi}_b^\top \mathbf b_p+\varepsilon}$ \COMMENT{Projected attractive mean in $\mathbb R^D$}
    \ENDFOR

    \STATE \textbf{// Compute exact repulsion over the current generated batch}
    \FOR{$b = 1, \ldots, B$}
        \STATE $K^-_{b\ell} \leftarrow k(\mathbf x_b, \mathbf x_\ell)$ for $\ell \in [B]$
        \STATE $K^-_{bb} \leftarrow 0$ \COMMENT{Remove self-interaction}
        \STATE $\mu_q(\mathbf x_b) \leftarrow \dfrac{\sum_{\ell=1}^{B} K^-_{b\ell}\mathbf x_\ell}{\sum_{\ell=1}^{B} K^-_{b\ell}+\varepsilon}$ \COMMENT{Exact repulsive mean}
    \ENDFOR

    \STATE \textbf{// Assemble the DriftXpress field}
    \FOR{$b = 1, \ldots, B$}
        \STATE $\mathbf V^{\mathbf U}_{p,q}(\mathbf x_b) \leftarrow \mu^U_p(\mathbf x_b) - \mu_q(\mathbf x_b)$
    \ENDFOR

    \STATE \textbf{// Compute loss and update}
    \STATE $\mathcal{L} \leftarrow \dfrac{1}{B}\sum_{b=1}^B \left\| f_\theta(\epsilon_b) - \mathrm{stopgrad}\!\left(\mathbf x_b + \mathbf V^{\mathbf U}_{p,q}(\mathbf x_b)\right) \right\|_2^2$
    \STATE $\theta \leftarrow \theta - \eta \nabla_\theta \mathcal{L}$

\ENDFOR

\STATE \RETURN $\theta$
\end{algorithmic}
\end{algorithm}

\subsection{Complexity Analysis}
\label{sec:complexity}

We analyze the cost of estimating the drifting field. Let $B$ be the number of generated query samples per step, $N^+$ the number of training samples used for attraction, $N^-$ the number of generated samples used for repulsion, $D$ the feature dimension, and $r$ the number of landmarks.

\paragraph{Standard Drift.} Standard drifting computes both attraction and repulsion through exact kernel evaluations. For each query $\mathbf x_b$, the attractive component requires the weights
\[
  k(\mathbf x_b,\mathbf y_j^+),
  \qquad j=1,\dots,N^+,
\]
and the repulsive component requires the weights
\[
  k(\mathbf x_b,\mathbf y_\ell^-),
  \qquad \ell=1,\dots,N^-.
\]
Thus, for $B$ query samples, standard drifting must form $BN^+$ attraction kernels and $BN^-$ repulsion kernels at every training step. Since each kernel evaluation requires computing a distance in $D$-dimensional feature space, the per-step field-estimation cost is
\[
  \mathcal O\!\left(BN^+D + BN^-D\right),
\]
with memory
\[
  \mathcal O\!\left(BN^+ + BN^-\right)
\]

\paragraph{DriftXpress.}
DriftXpress replaces the exact attractive-side kernel evaluations with a Nystr\"om approximation. Given landmarks
\[
  \mathbf U=\{\mathbf u_1,\dots,\mathbf u_r\}\subset\R^D,
\]
we form
\[
  [\mathbf K_{UU}]_{ab}=k(\mathbf u_a,\mathbf u_b),
  \qquad
  \mathbf W=(\mathbf K_{UU}+\lambda\mathbf I)^{-1/2}.
\]
This preprocessing costs
\[
  \mathcal O(r^2D+r^3),
\]
where $r^2D$ forms the landmark Gram matrix and $r^3$ computes the inverse square root. For any point $\mathbf z$, the Nystr\"om feature is
\[
  \boldsymbol{\varphi}(\mathbf z)
  =
  \mathbf K_{\mathbf zU}\mathbf W,
  \qquad
  \mathbf K_{\mathbf zU}
  =
  [k(\mathbf z,\mathbf u_1),\dots,k(\mathbf z,\mathbf u_r)].
\]
Computing one feature costs $\mathcal O(rD+r^2)$.

The attractive summaries are precomputed once:
\[
  \mathbf A_p
  =
  \sum_{j=1}^{N^+}
  \boldsymbol{\varphi}(\mathbf y_j^+)\mathbf y_j^{+\top}
  \in\R^{r\times D},
  \qquad
  \mathbf b_p
  =
  \sum_{j=1}^{N^+}
  \boldsymbol{\varphi}(\mathbf y_j^+)
  \in\R^r.
\]
The one-time attractive-side preprocessing cost is
\[
  \mathcal O\!\left(r^2D+r^3+N^+(rD+r^2)\right).
\]
At training time, for $B$ generated queries, DriftXpress computes their projected features and evaluates
\[
  \boldsymbol{\mu}^{\mathbf U}_p(\mathbf x)
  =
  \frac{\boldsymbol{\varphi}(\mathbf x)^\top\mathbf A_p}
       {\boldsymbol{\varphi}(\mathbf x)^\top\mathbf b_p}.
\]
This costs
\[
  \mathcal O\!\left(B(rD+r^2)\right)
\]
for the projected attractive component.

The repulsive component is unchanged and is computed exactly:
\[
  \boldsymbol{\mu}_q(\mathbf x)
  =
  \frac{\sum_{\ell=1}^{N^-}k(\mathbf x,\mathbf y^-_\ell)\mathbf y^-_\ell}
       {\sum_{\ell=1}^{N^-}k(\mathbf x,\mathbf y^-_\ell)}.
\]
Thus exact repulsion costs
\[
  \mathcal O(BN^-D)
\]
per step. Overall, DriftXpress has per-step field-estimation cost
\[
  \mathcal O\!\left(B(rD+r^2)+BN^-D\right),
\]
compared with
\[
  \mathcal O\!\left(BN^+D+BN^-D\right)
\]
for standard drifting. Therefore, DriftXpress removes the repeated dependence on the training support size $N^+$ and replaces it with dependence on the landmark dimension $r$, while preserving exact repulsion. The gain is largest when $r\ll N^+$ and attraction is a substantial part of the field-estimation cost.

\subsection{Memory and Sharding}
\label{app:shard}
Treating the number of feature groups and temperatures as constants, standard drifting
materializes exact attraction and repulsion weights of sizes $B\times N^+$ and
$B\times N^-$. Its dominant batchwise memory is therefore
\[
  \mathcal O(BN^+ + BN^-).
\]

\method{} replaces the exact attractive kernel with a Nystr\"om cache. With $r$
landmarks, the attractive summaries store
\[
  \mathbf A_p\in\mathbb R^{r\times D},
  \qquad
  \mathbf W\in\mathbb R^{r\times r},
\]
and each step materializes query features of size $B\times r$. Thus the dominant
memory is
\[
  \mathcal O(rD+r^2+Br+BN^-),
\]
or more precisely $\mathcal O(T(rD+r^2)+Br+BN^-)$ when the number of temperatures
$T$ is not treated as constant.

For large landmark banks, we split the attractive cache into $S$ shards with
$r_s$ landmarks per shard, where $r=\sum_{s=1}^S r_s$. Sharding is possible
because the attractive numerator and denominator decompose additively across
shards. The total resident attractive cache becomes
\[
  \mathcal O\!\left(\sum_{s=1}^S (r_sD+r_s^2)\right)
  =
  \mathcal O\!\left(rD+\sum_{s=1}^S r_s^2\right).
\]
Under sequential shard evaluation, the active query-feature workspace is reduced
from $\mathcal O(Br)$ to
\[
  \mathcal O\!\left(\max_{s\in[S]} Br_s\right).
\]
Therefore, when all shards remain resident on GPU, the peak memory is
\[
  \mathcal O\!\left(
    rD+\sum_{s=1}^S r_s^2
    +\max_{s\in[S]}Br_s
    +BN^-
  \right),
\]
up to common output and accumulator buffers. If shards are streamed one at a time
from CPU or off-rank storage, the peak memory can instead be reduced to
\[
  \mathcal O\!\left(
    \max_{s\in[S]}(r_sD+r_s^2+Br_s)
    +BN^-
  \right).
\]

Sharding therefore trades throughput for memory. It does not reduce the total
linear summary storage $rD$ when all shards remain resident, but it reduces the
quadratic Nystr\"om term from $r^2$ to $\sum_s r_s^2$ and avoids materializing the
full $B\times r$ query-feature workspace at once. For balanced shards, this
changes the quadratic term from $\mathcal O(r^2)$ to $\mathcal O(r^2/S)$. In the
class-wise setting, with $C$ classes and $m$ landmarks per class, the unsharded
quadratic term scales as $\mathcal O(C^2m^2)$, while class-wise sharding reduces
it to $\mathcal O(Cm^2)$. The cost is sequential accumulation over shards, which
lowers throughput relative to the unsharded variant whenever both fit in memory.

\section{Additional Experiments}
\label{app:additional_exp}

\subsection{Experimental Setup}
\label{app:experimental-setup}

We use the same generator, feature encoder, optimizer, and evaluation protocol for all reported runs unless otherwise stated. The goal of the experimental protocol is to isolate the effect of the field estimator: standard drifting and \method{} use the same generator architecture, the same drifting objective, the same frozen feature encoder, and the same optimization hyperparameters. The only difference is that standard drifting estimates attraction with exact kernel interactions, whereas \method{} replaces the attractive component with a Nystr\"om-projected RKHS field and cached positive summaries.

\paragraph{Datasets.}
We evaluate on SVHN, CIFAR10, CIFAR100, and ImageNet. CIFAR10 and CIFAR100 contain $50{,}000$ training images, with $10$ and $100$ classes, respectively. SVHN contains $73{,}257$ training images and $10$ imbalanced classes. ImageNet contains $1{,}281{,}167$ training images and $1{,}000$ imbalanced classes. All datasets are used with their standard training splits for training and reference statistics. 

\paragraph{Generator architecture.}
Every reported run uses the same U-Net generator. The generator has base channel width $128$, channel multipliers $(1,2,2,2)$, two residual blocks per resolution, self-attention at spatial resolution $16 \times 16$, dropout $0.1$, and four attention heads. For RGB datasets, this architecture has $38.3$M trainable parameters. Since \method{} changes only the training-time field estimator, the generator architecture and one-step inference procedure are identical to standard drifting.

\paragraph{Feature encoder.}
The drifting field is computed in the representation space of a frozen DINOv3 feature encoder. We use input resolution $112 \times 112$ and pooled spatial size $4$. The encoder has $85.6$M frozen parameters. The encoder is used only during training to define the feature-space drifting objective. It is not used at inference time, where generation requires only a single forward pass through the trained U-Net generator.

\paragraph{Optimization.}
All methods are trained with AdamW using learning rate $2 \times 10^{-4}$, $\beta_1 = 0.9$, $\beta_2 = 0.999$, and zero weight decay. We use EMA with decay $0.9999$, gradient clipping at norm $2.0$, fused optimizer updates, and bfloat16 autocast. Both the generator and the frozen feature encoder are compiled in the reported runs.

\paragraph{Drifting field configuration.}
Standard drifting computes both attraction and repulsion using exact kernel-weighted barycenters. In contrast, \method{} uses a Nystr\"om approximation only for the attractive component and keeps the repulsive component exact. Landmarks are selected from the training data before training begins. Unless otherwise stated, we use random per-class landmark selection. 

\paragraph{Sharding.}
For datasets with many classes or large landmark banks, we use the sharded variant of \method{}. Sharding partitions the positive summaries by class and accumulates the attractive numerator and denominator sequentially across shards. In the main benchmarks, we use one shard per class: $10$ shards for SVHN and CIFAR10, $100$ shards for CIFAR100, and $1000$ shards for ImageNet. This reduces the active memory required for projected attraction while preserving the same additive attractive field.

\paragraph{Compute.}
Main benchmark experiments are run on $8$ NVIDIA H100 GPUs. Ablations and profiling experiments are run on a single NVIDIA H100 GPU unless otherwise specified. The main benchmark uses a fixed wall-clock budget of $8$ hours and a global batch size of $12{,}000$, corresponding to $1500$ samples per GPU. We report iteration speed, image throughput, wall-clock runtime, peak GPU memory, and relative speedup against standard drifting under the same dataset and training protocol.

\paragraph{Random seeds and error bars.}
All main quantitative results are reported as mean $\pm$ standard deviation across three independent seeds. The seeds affect model initialization, data ordering, noise sampling, and landmark selection when random landmarks are used. For each run, we track FID throughout training and report \emph{Best FID}, defined as the lowest FID achieved during the run.

\paragraph{Kernel and temperature.}
All drifting fields use the Laplace kernel in the frozen feature space. For feature vectors
$x,y \in \mathbb{R}^D$, we define
\[
    k_\tau(x,y)
    =
    \exp\left(-\frac{\|x-y\|_2}{\tau}\right),
\]
where $\tau>0$ is the kernel temperature. We use $\tau=0.05$ for all reported experiments. The same kernel and temperature are used for standard drifting and \method{}. 

\paragraph{FID evaluation.}
We evaluate sample quality using Fr\'echet Inception Distance (FID)~\citep{heusel2017gans}. FID is computed with the standard Inception-v3 feature extractor. For each evaluation checkpoint, we generate samples with the one-step generator, extract Inception features for generated and real images, fit Gaussian statistics to both feature sets, and report the resulting FID. We use the same FID code, preprocessing, reference set, number of generated samples, and evaluation schedule for standard drifting and \method{}.

For CIFAR10, CIFAR100, and SVHN, FID is computed using a fixed $50{,}000$-image real reference set from the corresponding dataset and $50{,}000$ generated samples. The reported \emph{Best FID} is the lowest FID observed over the evaluation checkpoints of a training run. For ImageNet, FID is computed against the $50{,}000$ images from the ILSVRC2012 validation split as the real reference distribution. We generate $50{,}000$ samples from the trained generator. For class-conditional ImageNet generation, labels are sampled uniformly over the $1000$ classes, corresponding to $50$ generated images per class.

\begin{table}[t]
\centering
\small
\setlength{\tabcolsep}{6pt}
\caption{Dataset statistics.}
\label{tab:dataset-details}
\begin{tabular}{lcccc}
\toprule
Dataset & Training samples & Classes & Class Balance & Samples per class \\
\midrule
CIFAR10 & $50{,}000$ & $10$ & Balanced & $5{,}000$ \\
CIFAR100 & $50{,}000$ & $100$ & Balanced & $500$ \\
SVHN & $73{,}257$ & $10$ & Imbalanced & $4{,}659$--$13{,}861$ \\
ImageNet & $1{,}281{,}167$ & $1{,}000$ & Imbalanced & $732$--$1{,}300$ \\
\bottomrule
\end{tabular}
\end{table}

\subsection{Landmark Selection Algorithms}
\label{app:landmark-quality}

We compare four landmark-selection strategies: random sampling, $k$-means, $k$-center, and facility location. These methods differ in whether they prioritize simplicity, density-aware representativeness, worst-case geometric coverage, or average-case similarity coverage. The main paper reports the resulting FID and landmark-selection time for CIFAR10, where global $k$-means gives the strongest performance among the tested methods. 

\paragraph{Random sampling.}
Random sampling selects $M$ landmarks uniformly without replacement:
\[
  \mathbf U \sim \operatorname{Unif}\{\mathbf U\subseteq\mathcal Y:|\mathbf U|=M\}.
\]
It is computationally cheap and unbiased with respect to the empirical distribution, but it does not optimize coverage or reduce redundancy.

\paragraph{$k$-means.}
$k$-means first solves
\[
  \min_{\mathbf c_1,\dots,\mathbf c_M}
  \sum_{i=1}^N
  \min_{m\in[M]}
  \|\mathbf y_i-\mathbf c_m\|_2^2,
\]
and then selects the nearest real feature to each centroid:
\[
  \mathbf u_m
  =
  \arg\min_{\mathbf y_i\in\mathcal Y}
  \|\mathbf y_i-\mathbf c_m\|_2.
\]
This gives a medoid-like landmark set with $\mathbf U\subseteq\mathcal Y$. Because the objective is density-weighted, $k$-means places more landmarks in high-mass regions of the feature distribution. This is well aligned with \method{}, where the field is determined by normalized kernel averages rather than worst-case coverage.

\paragraph{$k$-center.}
$k$-center approximates the minimax covering objective
\[
  \min_{\mathbf U\subseteq\mathcal Y,\ |\mathbf U|=M}
  \max_{\mathbf y_i\in\mathcal Y}
  \min_{\mathbf u\in\mathbf U}
  \|\mathbf y_i-\mathbf u\|_2.
\]
The greedy farthest-point rule adds
\[
  \mathbf u_{t+1}
  =
  \arg\max_{\mathbf y_i\in\mathcal Y}
  \min_{\mathbf u\in\mathbf U_t}
  \|\mathbf y_i-\mathbf u\|_2.
\]
This prioritizes worst-case geometric spread. While useful for covering radius, it can overallocate landmarks to isolated or low-density points. For \method{}, this is not necessarily desirable because the projected field depends on typical kernel-weighted neighborhoods, not only on the worst-covered feature.

\paragraph{Facility location.}
Facility location maximizes a similarity-based coverage objective:
\[
  F(\mathbf U)
  =
  \sum_{i=1}^N
  \max_{\mathbf u\in\mathbf U}
  s(\mathbf y_i,\mathbf u),
  \qquad
  |\mathbf U|=M,
\]
where $s(\cdot,\cdot)$ is a nonnegative similarity. In our implementation, we use an exponential Euclidean-distance similarity
\[
s(x,u)=\exp\!\left(-\frac{\|x-u\|_2}{\tau}\right),
\]
where $\tau$ is the temperature-scaled bandwidth. Given a candidate set $C$ and an evaluation set $E$, the facility-location objective is
\[
F(S)=\sum_{x\in E}\max_{u\in S}s(x,u).
\]
The greedy update is
\[
u_t
=
\arg\max_{u\in C\setminus S_{t-1}}
\sum_{x\in E}
\left[
s(x,u)-\max_{v\in S_{t-1}}s(x,v)
\right]_+,
\]
and we set $S_t=S_{t-1}\cup\{u_t\}$. Here, the gain measures the additional coverage obtained by adding $u$ beyond the best landmark already selected.

Unlike $k$-center, facility location optimizes average-case similarity coverage. It is therefore more density-aware, but usually more expensive because marginal gains require similarities to many training points.

\subsection{Exact and Projected Repulsion}
\label{app:exact-projected-repulsion}

\method{} approximates the attractive component with a projected RKHS field, but keeps the repulsive component exact. We ablate this design choice by also replacing the repulsive term with the same Nystr\"om approximation used for attraction. Table~\ref{tab:repulsion-ablation} reports the resulting FID.

\begin{table}[t]
\centering
\small
\setlength{\tabcolsep}{7pt}
\caption{Ablation of exact versus projected repulsion. Exact repulsion is the default \method{} implementation. Projecting the repulsive term causes severe collapse, showing that repulsion is substantially more sensitive to approximation error than attraction.}
\label{tab:repulsion-ablation}
\begin{tabular}{lcc}
\toprule
Dataset & Repulsion estimator & Best FID $\downarrow$ \\
\midrule
SVHN & Exact & $3.11 \pm 0.12$ \\
SVHN & Nystr\"om & $347.6 \pm 21.4$ \\
\midrule
CIFAR10 & Exact & $5.52 \pm 0.06$ \\
CIFAR10 & Nystr\"om & $368.3 \pm 18.7$ \\
\midrule
CIFAR100 & Exact & $6.15 \pm 0.09$ \\
CIFAR100 & Nystr\"om & $382.9 \pm 16.2$ \\
\midrule
ImageNet & Exact & $9.21 \pm 0.16$ \\
ImageNet & Nystr\"om & $394.5 \pm 13.8$ \\
\bottomrule
\end{tabular}
\end{table}

The result is clear: the method is highly sensitive to the repulsive component. While projecting the attractive term preserves quality and improves speed, a direct reuse of training data landmarks for repulsion caused severe collapse in our experiments. This suggests that repulsion is more sensitive to approximation error and that safely approximating it may require landmarks adapted to the generated distribution. The attractive term mainly pulls generated samples toward the data manifold, and its approximation can be stabilized by precomputing a large training-data summary. The repulsive term, however, controls local diversity among generated samples. Small errors in this term can remove or distort the forces that separate nearby generated samples, causing them to concentrate in a small region of feature space.

This ablation motivates our default design: \method{} applies the Nystr\"om approximation only to attraction and keeps repulsion exact. This removes the main training-data bottleneck from the inner loop while preserving the stabilizing effect of exact generated-sample repulsion.

\subsection{Inference-time comparison}
\label{app:inference-time}

\begin{table*}[t]
\centering
\setlength{\tabcolsep}{4pt}
\caption{Inference-time benchmark. We report latency in milliseconds per generated sample from the final trained checkpoint. The inference times are similar as there both methods use the same one-step image generation.}
\label{tab:inference_meta_compare_3seed_model_only}
\begin{tabular}{llc}
\toprule
\textbf{Dataset} & \textbf{Method} & \textbf{Inference time (ms/sample)} $\downarrow$ \\
\midrule
CIFAR10  & Standard drifting & $0.177512 \pm 0.000062$ \\
CIFAR10  & DriftXpress        & $0.177348 \pm 0.000055$ \\
\midrule
CIFAR100 & Standard drifting & $0.177442 \pm 0.000124$ \\
CIFAR100 & DriftXpress        & $0.177417 \pm 0.000119$ \\
\midrule
SVHN      & Standard drifting & $0.177393 \pm 0.000004$ \\
SVHN      & DriftXpress        & $0.177426 \pm 0.000018$ \\
\midrule
ImageNet  & Standard drifting & $0.177264 \pm 0.000027$ \\
ImageNet  & DriftXpress        & $0.177285 \pm 0.000022$ \\
\bottomrule
\end{tabular}
\end{table*}

Table~\ref{tab:inference_meta_compare_3seed_model_only} compares the inference-time latency of standard drifting and DriftXpress. The two methods have essentially identical latency across all datasets, with all measurements around $0.177$ ms per generated sample. This is expected: DriftXpress changes the field-estimation procedure used during training, but it does not change the generator architecture or the one-step sampling procedure used at inference time.

This confirms the intended trade-off. DriftXpress reduces the wall-clock cost of training while preserving the inference-time advantage of drifting models. Once training is complete, both standard drifting and DriftXpress generate samples with the same single generator evaluation.

\begin{table}[t]
\centering
\small
\setlength{\tabcolsep}{5pt}
\caption{Per-iteration field-estimation profile on CIFAR10 using random per-class landmark selection. Entries report mean $\pm$ standard deviation over three seeds. Lower total field-estimation and full-iteration times are bolded.}
\label{tab:kernel_profile}
\resizebox{\textwidth}{!}{
\begin{tabular}{lccc}
\toprule
Component & Standard drifting & DriftXpress (sharded) & DriftXpress (unsharded) \\
\midrule
Exact attraction kernels $k(\mathbf x_b,\mathbf y_j^+)$
& $157.8 \pm 11.7$ ms & -- & -- \\

Nystr\"om query kernels $K(\mathbf x_b,\mathbf U)$
& -- & $40.1 \pm 0.1$ ms & $\mathbf{33.7 \pm 0.0}$ ms \\

Projected attraction $K_B\tilde{\mathbf A}_p,K_B\tilde{\mathbf b}_p$
& -- & $20.8 \pm 0.0$ ms & $\mathbf{15.4 \pm 0.0}$ ms \\

Remaining field reduction / repulsion / aggregation
& $111.6 \pm 0.1$ ms & $24.4 \pm 0.1$ ms & $\mathbf{24.0 \pm 0.1}$ ms \\

\midrule
Total field estimation
& $269.4 \pm 11.8$ ms & $85.3 \pm 0.2$ ms & $\mathbf{73.2 \pm 0.2}$ ms \\

Full training iteration
& $760.3 \pm 12.1$ ms & $576.6 \pm 0.9$ ms & $\mathbf{564.5 \pm 0.1}$ ms \\
\bottomrule
\end{tabular}
}
\end{table}

\begin{table}[t]
\centering
\small
\setlength{\tabcolsep}{5pt}
\caption{One-time DriftXpress preprocessing cost using random per-class landmark selection. Entries report mean $\pm$ standard deviation over three seeds. The largest component for each dataset is bolded.}
\label{tab:precompute_profile}
\resizebox{\textwidth}{!}{
\begin{tabular}{lccccc}
\toprule
Dataset & Feature extraction & Landmark selection & $\mathbf K_{UU}$ build + $\mathbf K_{UU}^{-1/2}$ & Summary construction & Total \\
\midrule
CIFAR10 & \textbf{19.7 $\pm$ 0.4 s} & 2.4 $\pm$ 0.1 s & 7.4 $\pm$ 0.1 s & 4.6 $\pm$ 0.0 s & 37.5 $\pm$ 0.5 s \\
CIFAR100 & 19.6 $\pm$ 0.2 s & 2.5 $\pm$ 0.0 s & \textbf{25.2 $\pm$ 0.2 s} & 6.6 $\pm$ 0.1 s & 59.6 $\pm$ 0.3 s \\
SVHN & \textbf{28.1 $\pm$ 2.6 s} & 3.7 $\pm$ 0.6 s & 19.1 $\pm$ 0.1 s & 9.9 $\pm$ 1.8 s & 63.0 $\pm$ 5.0 s \\
\bottomrule
\end{tabular}
}
\end{table}

\subsection{DriftXpress Cost Profile}
\label{app:cost-profile}

We profile the online and offline costs of DriftXpress separately. For the online profile, we use the CIFAR10 configuration matching the main benchmark setting. Landmarks are selected by random per-class sampling, and all times are averaged over three seeds. We report both class-sharded and unsharded DriftXpress. For the offline profile, we measure the one-time preprocessing pipeline on SVHN, CIFAR10, CIFAR100, and ImageNet.

\subsubsection{Online Cost Profile}
Table~\ref{tab:kernel_profile} reports the per-iteration profile. Standard drifting spends $269.4 \pm 11.8$ ms on field estimation. DriftXpress reduces this to $85.3 \pm 0.2$ ms in the sharded configuration and $73.2 \pm 0.2$ ms in the unsharded configuration, corresponding to reductions of $68.3\%$ and $72.8\%$, respectively. End-to-end iteration time decreases from $760.3 \pm 12.1$ ms for standard drifting to $576.6 \pm 0.9$ ms for sharded DriftXpress and $564.5 \pm 0.1$ ms for unsharded DriftXpress, corresponding to full-iteration reductions of $24.2\%$ and $25.8\%$.

The profile also identifies where the savings occur. Standard drifting spends $157.8 \pm 11.7$ ms on exact attraction kernels and $111.6 \pm 0.1$ ms on repulsion, field normalization, and aggregation. DriftXpress removes the exact attraction block and replaces it with Nystr\"om query kernels plus projected attraction. These two blocks cost $40.1 \pm 0.1$ ms and $20.8 \pm 0.0$ ms in the sharded variant, and $33.7 \pm 0.0$ ms and $15.4 \pm 0.0$ ms in the unsharded variant. The remaining repulsion and aggregation cost is $24.4 \pm 0.1$ ms for sharded DriftXpress and $24.0 \pm 0.1$ ms for unsharded DriftXpress. The unsharded variant is faster because it avoids per-shard projection and accumulation overhead. The sharded variant is still useful when the full summary does not fit in memory or when classwise partitioning is needed for larger settings.

Total field estimation measures only the cost of computing the drifting field: attraction, repulsion, normalization, and aggregation. Full training iteration measures the entire optimization step, including field estimation as well as the generator forward pass, feature computation, loss evaluation, backpropagation, and optimizer update.

\subsubsection{Computational steps in Table~\ref{tab:kernel_profile}}
Let $\{\mathbf x_b\}_{b=1}^{B}$ denote the generated query batch, let
$\{\mathbf y_j^+\}_{j=1}^{N^+}$ denote the positive reference samples used for attraction, and let
$\{\mathbf y_\ell^-\}_{\ell=1}^{N^-}$ denote the generated samples used for repulsion. All quantities are evaluated in the pre-trained feature representation space that serves as the input domain of the drifting kernel.

\paragraph{Exact attraction kernels $k(\mathbf x_b,\mathbf y_j^+)$.}
In standard drifting, the attractive component forms the pairwise kernel matrix
\[
  \mathbf K^+_{bj}
  =
  k(\mathbf x_b,\mathbf y_j^+),
  \qquad
  b=1,\dots,B,\quad j=1,\dots,N^+ .
\]
These weights define the exact attractive barycenter
\[
  \boldsymbol{\mu}_p(\mathbf x_b)
  =
  \frac{\sum_{j=1}^{N^+}\mathbf K^+_{bj}\mathbf y_j^+}
       {\sum_{j=1}^{N^+}\mathbf K^+_{bj}}.
\]
This row measures the time spent forming the exact attractive kernel weights between the generated batch and the positive reference samples. This is the repeated training-data interaction that \method{} removes from the online loop.

\paragraph{Nystr\"om query kernels $K(\mathbf x_b,\mathbf U)$.}
\method{} replaces the exact attractive kernel matrix with query-landmark kernels. Given landmarks
$\mathbf U=\{\mathbf u_1,\dots,\mathbf u_r\}$, it forms
\[
  \mathbf K_{BU}[b,a]
  =
  k(\mathbf x_b,\mathbf u_a),
  \qquad
  b=1,\dots,B,\quad a=1,\dots,r .
\]
With the precomputed transform
\[
  \mathbf W=(\mathbf K_{UU}+\lambda\mathbf I)^{-1/2},
\]
the query features are
\[
  \boldsymbol{\Phi}_B
  =
  \mathbf K_{BU}\mathbf W
  \in \mathbb R^{B\times r},
  \qquad
  [\boldsymbol{\Phi}_B]_{b,:}
  =
  \boldsymbol{\varphi}(\mathbf x_b)^\top .
\]
This row measures the online cost of computing the Nystr\"om features for the generated queries.

\paragraph{Projected attraction $K_B\tilde{\mathbf A}_p, K_B\tilde{\mathbf b}_p$.}
The positive summaries are precomputed offline as
\[
  \mathbf A_p
  =
  \sum_{j=1}^{N^+}
  \boldsymbol{\varphi}(\mathbf y_j^+)\mathbf y_j^{+\top}
  \in \mathbb R^{r\times D},
  \qquad
  \mathbf b_p
  =
  \sum_{j=1}^{N^+}
  \boldsymbol{\varphi}(\mathbf y_j^+)
  \in \mathbb R^r .
\]
Equivalently, after folding the Nystr\"om transform into the summaries, we denote the stored tensors by
$\tilde{\mathbf A}_p$ and $\tilde{\mathbf b}_p$. The projected attractive numerator and denominator for the full batch are computed as
\[
  \mathbf N_p^{\mathbf U}
  =
  \boldsymbol{\Phi}_B\mathbf A_p
  \in \mathbb R^{B\times D},
  \qquad
  \mathbf d_p^{\mathbf U}
  =
  \boldsymbol{\Phi}_B\mathbf b_p
  \in \mathbb R^B .
\]
For each query,
\[
  \boldsymbol{\mu}^{\mathbf U}_p(\mathbf x_b)
  =
  \frac{
    [\mathbf N_p^{\mathbf U}]_{b,:}
  }{
    [\mathbf d_p^{\mathbf U}]_b
  }.
\]
This row measures the low-rank matrix products used to recover the attractive barycenter from the precomputed summaries.

\paragraph{Remaining field reduction / repulsion / aggregation.}
This row collects the remaining online field computation after the attraction-specific block has been evaluated. For standard drifting, it includes the coupled dense exact-field reduction after forming exact attraction kernels, together with the repulsive computation and field aggregation. For \method{}, attraction has already been summarized by the projected blocks above, so this row consists of exact batch repulsion and lightweight aggregation.

The exact repulsive component forms the generated-sample kernel matrix
\[
  \mathbf K^-_{b\ell}
  =
  k(\mathbf x_b,\mathbf y_\ell^-),
  \qquad
  b=1,\dots,B,\quad \ell=1,\dots,N^- .
\]
In practice, when the query batch and repulsive support are the same generated batch, the diagonal self-interactions are masked before normalization. The exact repulsive barycenter is
\[
  \boldsymbol{\mu}_q(\mathbf x_b)
  =
  \frac{
    \sum_{\ell=1}^{N^-}\mathbf K^-_{b\ell}\mathbf y_\ell^-
  }{
    \sum_{\ell=1}^{N^-}\mathbf K^-_{b\ell}
  }.
\]
The final field is then assembled as
\[
  \mathbf V_{p,q}(\mathbf x_b)
  =
  \boldsymbol{\mu}_p(\mathbf x_b)
  -
  \boldsymbol{\mu}_q(\mathbf x_b)
\]
for standard drifting, and as
\[
  \mathbf V^{\mathrm{DX}}_{p,q}(\mathbf x_b)
  =
  \boldsymbol{\mu}^{\mathbf U}_p(\mathbf x_b)
  -
  \boldsymbol{\mu}_q(\mathbf x_b)
\]
for \method{}. When multiple temperatures, feature groups, or resolutions are used, this step also includes per-group normalization and aggregation, e.g.
\[
  \mathbf V(\mathbf x_b)
  =
  \sum_{g\in\mathcal G}
  \alpha_g
  \frac{\mathbf V_g(\mathbf x_b)}
       {\|\mathbf V_g(\mathbf x_b)\|+\varepsilon}.
\]
Thus, this row should not be interpreted as an isolated timing of the negative barycenter alone. It measures the remaining field-reduction, exact-repulsion, normalization, and aggregation work after the method-specific attraction computation has been accounted for.

\paragraph{Total field estimation.}
This is the total time spent computing the drifting field used to construct the training target. For standard drifting, it includes exact attraction and the remaining field-reduction, repulsion, and aggregation block:
\[
  T_{\mathrm{field}}^{\mathrm{std}}
  =
  T_{\mathrm{attr}}^{\mathrm{exact}}
  +
  T_{\mathrm{rem}}^{\mathrm{std}}.
\]
For \method{}, it includes Nystr\"om query kernels, projected attraction, and the remaining exact-repulsion and aggregation block:
\[
  T_{\mathrm{field}}^{\mathrm{DX}}
  =
  T_{\mathrm{query}\text{-}\mathrm{Nys}}
  +
  T_{\mathrm{attr}}^{\mathrm{proj}}
  +
  T_{\mathrm{rem}}^{\mathrm{DX}}.
\]
This quantity isolates the cost of estimating the drifting field itself: attraction, repulsion, normalization, and aggregation.

\paragraph{Full training iteration.}
This is the end-to-end time for one optimization step. It includes field estimation as well as all other training-loop components:
\[
  T_{\mathrm{iter}}
  =
  T_{\mathrm{encoder}}
  +
  T_{\mathrm{field}}
  +
  T_{\mathrm{loss/backward}}
  +
  T_{\mathrm{optimizer}}.
\]
Thus, total field estimation measures only the drifting-field computation, while full training iteration measures the complete optimization step, including encoder forward passes, construction of the drifted target, generator loss evaluation, backpropagation, and optimizer update. This is why a large reduction in field-estimation time translates into a smaller, but still substantial, reduction in end-to-end iteration time.

\subsubsection{Offline Cost‌ Profile}
Table~\ref{tab:precompute_profile} reports one-time preprocessing costs with random per-class landmarks. On CIFAR10, total preprocessing takes $37.5 \pm 0.5$ s. Feature extraction is the largest component at $19.7 \pm 0.4$ s, followed by the combined $\mathbf K_{UU}$ construction and $\mathbf K_{UU}^{-1/2}$ factorization stage at $7.4 \pm 0.1$ s. On CIFAR100, total preprocessing increases to $59.6 \pm 0.3$ s, with the $\mathbf K_{UU}$ stage becoming the largest component at $25.2 \pm 0.2$ s. On SVHN, preprocessing takes $63.0 \pm 5.0$ s, dominated by feature extraction at $28.1 \pm 2.6$ s and the $\mathbf K_{UU}$ stage at $19.1 \pm 0.1$ s. Random landmark selection is cheap across these datasets, taking only $2.4$--$3.7$ s.

\paragraph{Takeaway.}
DriftXpress moves the training-data interactions to a one-time preprocessing stage, after which attraction is evaluated through Nystr\"om query kernels and low-rank matrix products. With random per-class landmarks, preprocessing takes under one minute on CIFAR10 and CIFAR100 and about one minute on SVHN. The main offline costs are feature extraction and $\mathbf K_{UU}$ construction/factorization, while landmark selection is negligible.

These measurements are from a single-GPU profile. In multi-GPU training, the benefit of removing exact training-data interactions from the inner loop can be larger, since each device evaluates the drifting field for its local generated batch. Standard drifting repeats attraction against the training support on every device and every step, whereas DriftXpress reuses the same precomputed summaries and only evaluates query-landmark kernels plus low-rank products locally. Thus, the one-time preprocessing cost is amortized across devices and training steps, while the per-iteration savings scale with the number of generated batches evaluated during training.

\subsection{Field Approximation Fidelity}
\label{app:field-fidelity}

We next isolate the quality of the projected field approximation, independent of generator training.
On CIFAR10, we compare the exact attractive field $V_p^+(\mathbf x)$ with the projected attractive field $V_{p,U}^+(\mathbf x)$ induced by different landmark budgets.
We use random per-class landmark selection with $25$, $50$, $100$, $250$, $500$, $1000$, and $2000$ landmarks per class.
Since CIFAR10 has $5000$ training samples per class, these budgets correspond to $0.5\%$, $1\%$, $2\%$, $5\%$, $10\%$, $20\%$, and $40\%$ of each class support, respectively.
For each budget, we report three fidelity metrics. Cosine similarity measures directional agreement between the exact and projected drift vectors:
\[
    \frac{1}{B}\sum_{b=1}^B
    \frac{
        \langle V_p^+(\mathbf x_b), V_{p,U}^+(\mathbf x_b) \rangle
    }{
        \|V_p^+(\mathbf x_b)\|_2 \, \|V_{p,U}^+(\mathbf x_b)\|_2
    } .
\]
Relative $\ell_2$ error measures the size of the approximation error normalized by the size of the exact field:
\[
    \frac{
        \|V_{p,U}^+(\mathbf X)-V_p^+(\mathbf X)\|_F
    }{
        \|V_p^+(\mathbf X)\|_F
    } .
\]
Target MSE measures the mean squared error between the exact drift target and the projected drift target, where
$\mathbf t_b=\mathbf x_b+V_p^+(\mathbf x_b)$ and
$\mathbf t_{b,U}=\mathbf x_b+V_{p,U}^+(\mathbf x_b)$:
\[
    \frac{1}{BD}\sum_{b=1}^B
    \|\mathbf t_{b,U}-\mathbf t_b\|_2^2 .
\]
Thus, cosine similarity evaluates whether the projected field points in the right direction, relative $\ell_2$ error evaluates the normalized field-distortion magnitude, and target MSE evaluates the error in the actual regression target used by the drifting loss.

Table~\ref{tab:cifar10-field-fidelity} shows that increasing the landmark budget consistently improves the fidelity of the projected field.
With only $25$ landmarks per class, the projected field has cosine similarity $0.7426 \pm 0.0002$ and relative $\ell_2$ error $0.8042 \pm 0.0019$, indicating that the landmark basis is too small to accurately reproduce the exact field.
At $500$ landmarks per class, corresponding to $10\%$ of each class support, cosine similarity increases to $0.9683 \pm 0.0003$ and relative $\ell_2$ error drops to $0.3250 \pm 0.0067$.
At the largest budget, $2000$ landmarks per class, the projected field is very close to the exact field, reaching cosine similarity $0.9930 \pm 0.0007$, relative $\ell_2$ error $0.1758 \pm 0.0273$, and target MSE $0.0317 \pm 0.0096$.

This trend directly matches the approximation guarantees in Theorem~\ref{thm:local_field_distortion_main} and Corollary~\ref{cor:on_support_field_distortion_main}.
Theorem~\ref{thm:local_field_distortion_main} shows that the query-wise distortion of the projected attractive field is controlled by the landmark kernel residual $\|r_U(\mathbf x)\|_2$ normalized by the local kernel mass $d_p(\mathbf x)$.
Corollary~\ref{cor:on_support_field_distortion_main} gives the corresponding on-support statement: the average squared field distortion is controlled by the Gram approximation error $\|\mathbf K-\mathbf K_U\|_F^2$.
Increasing the number of landmarks improves the Nystr\"om approximation of the training-support kernel matrix, reducing $\|\mathbf K-\mathbf K_U\|_F^2$ and therefore reducing the field error predicted by the theory.
The empirical decrease in relative $\ell_2$ error and target MSE in Table~\ref{tab:cifar10-field-fidelity} is consistent with this prediction.
It also explains why \method{} can preserve sample quality in the main benchmark: once the landmark basis captures the dominant geometry of the training distribution, the projected attractive field closely matches the exact attractive field while being much cheaper to evaluate during training.

\begin{table}[t]
\centering
\small
\setlength{\tabcolsep}{6pt}
\caption{Field approximation fidelity on CIFAR10.}
\label{tab:cifar10-field-fidelity}
\resizebox{\textwidth}{!}{
\begin{tabular}{ccccc}
\toprule
Landmarks / class & Total landmarks & Cosine similarity $\uparrow$ & Relative $\ell_2$ error $\downarrow$ & Target MSE $\downarrow$ \\
\midrule
25   & 250   & $0.7426 \pm 0.0002$ & $0.8042 \pm 0.0019$ & $0.6468 \pm 0.0031$ \\
50   & 500   & $0.8166 \pm 0.0010$ & $0.6820 \pm 0.0043$ & $0.4651 \pm 0.0058$ \\
100  & 1000  & $0.8785 \pm 0.0006$ & $0.5674 \pm 0.0035$ & $0.3219 \pm 0.0039$ \\
250  & 2500  & $0.9408 \pm 0.0005$ & $0.4235 \pm 0.0039$ & $0.1794 \pm 0.0033$ \\
500  & 5000  & $0.9683 \pm 0.0003$ & $0.3250 \pm 0.0067$ & $0.1057 \pm 0.0044$ \\
1000 & 10000 & $0.9840 \pm 0.0007$ & $0.2456 \pm 0.0182$ & $0.0607 \pm 0.0088$ \\
2000 & 20000 & $\mathbf{0.9930 \pm 0.0007}$ & $\mathbf{0.1758 \pm 0.0273}$ & $\mathbf{0.0317 \pm 0.0096}$ \\
\bottomrule
\end{tabular}
}
\end{table}

Figure~\ref{fig:cifar10_class_comparison} compares class-conditional samples from \method{} and standard drifting on CIFAR10.  Figure~\ref{fig:cifar100_class_comparison} provides the same comparison for 10 randomly selected CIFAR100 classes. These qualitative results are intended to complement the FID scores in Section~\ref{sec:experiments}: while \method{} changes the estimator used during training, it does not change the generator architecture, the one-step sampling procedure, or the target image domain. All profiled models correspond to the same checkpoints reported in Table~\ref{tab:main_results_big}.

Across both datasets, the samples produced by \method{} are visually comparable to those produced by standard drifting. The class identity is generally preserved, and we do not observe an obvious qualitative degradation from replacing exact attraction with projected summaries. This supports the main empirical claim that the computational gains of \method{} do not come from a visibly weaker generator, but from reducing the cost of estimating the attractive component of the drifting field. The CIFAR100 comparison is particularly useful because the larger number of classes makes the attraction summaries more demanding; nevertheless, the projected field still produces samples that are qualitatively close to the standard drifting baseline.

\begin{figure}[t]
    \centering
    \includegraphics[width=0.9\linewidth]{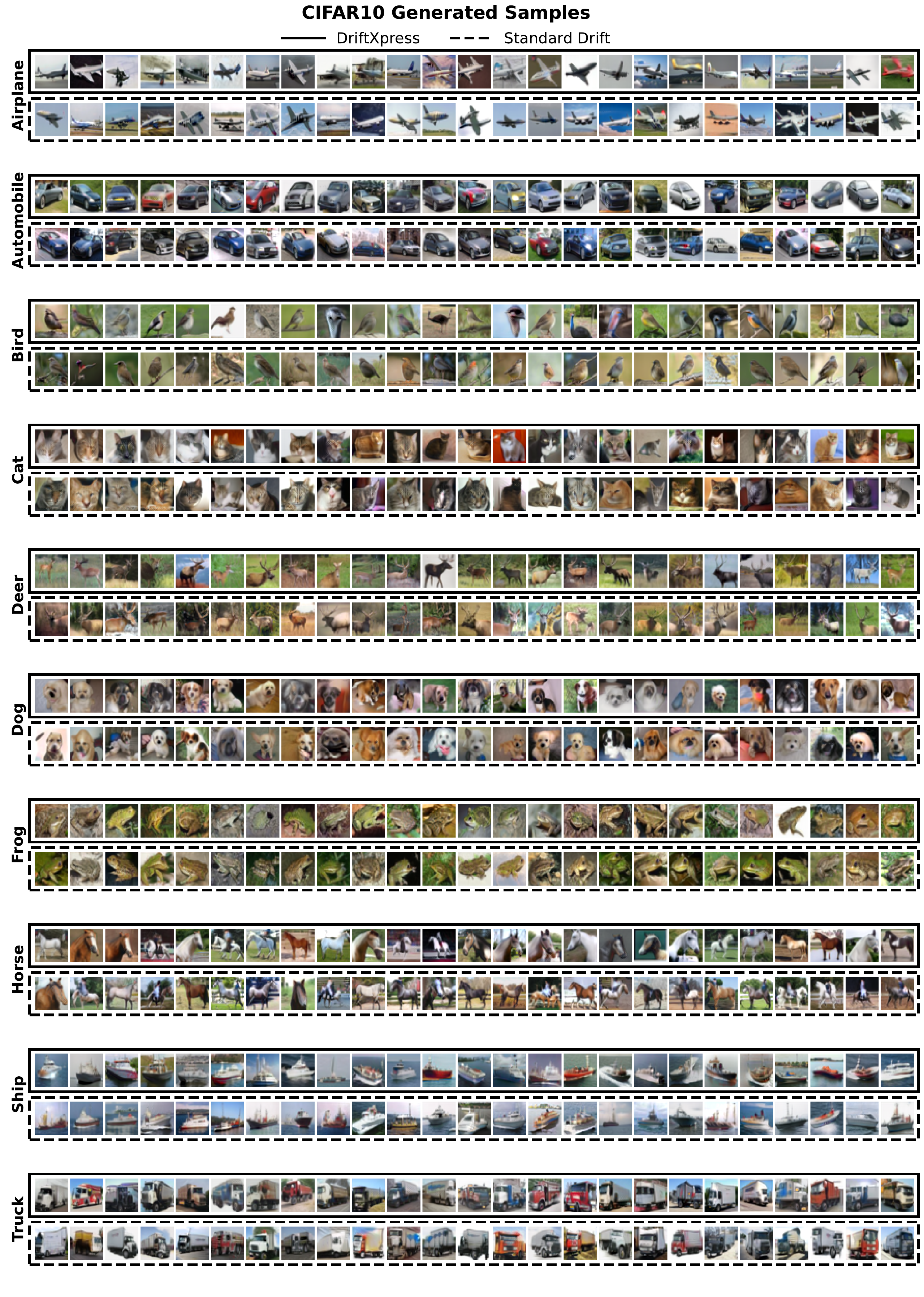}
    \caption{\textbf{CIFAR10 samples generated by DriftXpress and standard drifting.} For each class, the top row shows samples from DriftXpress and the bottom row shows samples from standard drifting. Both methods produce visually comparable samples across all CIFAR10 categories.}
    \label{fig:cifar10_class_comparison}
\end{figure}

\begin{figure}[t]
    \centering
    \includegraphics[width=0.9\linewidth]{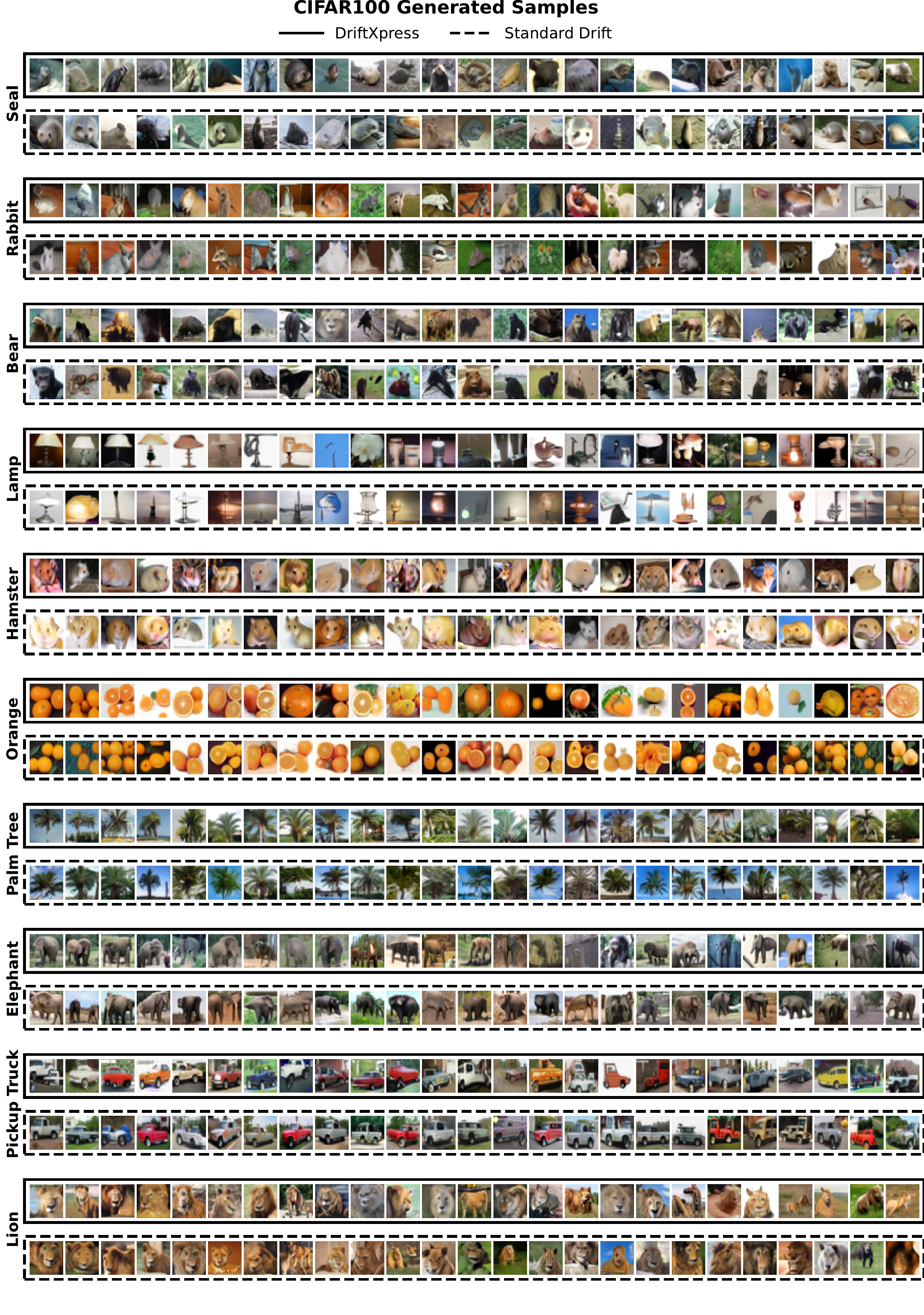}
\caption{\textbf{CIFAR100 samples generated by \method{} and standard drifting.}
We show 10 randomly selected CIFAR100 classes. For each class, the top row shows samples from \method{} and the bottom row shows samples from standard drifting. Both methods produce visually comparable samples across the selected CIFAR100 categories.}    \label{fig:cifar100_class_comparison}
\end{figure}

\section{Proofs}
\label{app:proofs}


\begin{theorem}[Compositionality of Nystr\"om attractive summaries]
Let
\[
\bar{\boldsymbol{\varphi}}(\mathbf x)
:=
\big[
\boldsymbol{\varphi}^{(1)}(\mathbf x);
\dots;
\boldsymbol{\varphi}^{(S)}(\mathbf x)
\big]
\]
denote the concatenation of the shard feature maps, and define the concatenated attractive summaries
\[
\bar{\mathbf A}_p
:=
\begin{bmatrix}
\mathbf A_p^{(1)}\\
\vdots\\
\mathbf A_p^{(S)}
\end{bmatrix},
\qquad
\bar{\mathbf b}_p
:=
\begin{bmatrix}
\mathbf b_p^{(1)}\\
\vdots\\
\mathbf b_p^{(S)}
\end{bmatrix}.
\]
Then the compositional attractive mean satisfies
\[
\mu_{p,\mathrm{comp}}(\mathbf x)
=
\frac{\bar{\boldsymbol{\varphi}}(\mathbf x)^\top \bar{\mathbf A}_p}
     {\bar{\boldsymbol{\varphi}}(\mathbf x)^\top \bar{\mathbf b}_p}.
\]
Consequently, when the same exact repulsive mean $\mu_q(\mathbf x)$ is used, the compositional field
\[
\mathbf V_{p,q}^{\mathrm{comp}}(\mathbf x)
:=
\mu_{p,\mathrm{comp}}(\mathbf x)-\mu_q(\mathbf x)
\]
is exactly equal to the field obtained from a single concatenated attractive summary and the same exact repulsive term.
\end{theorem}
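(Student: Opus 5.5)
The proof is a direct block-matrix computation; no earlier result is needed beyond the definitions of the shard summaries and of $\mu_{p,\mathrm{comp}}$ in \eqref{eq:compositional_positive_mean}.

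\emph{Notation and dimensions.} Let shard $s$ have $r_s$ landmarks. Then $\boldsymbol{\varphi}^{(s)}(\mathbf x)\in\R^{r_s}$, $\mathbf A_p^{(s)}\in\R^{r_s\times D}$ and $\mathbf b_p^{(s)}\in\R^{r_s}$. The concatenations therefore have sizes $\bar{\boldsymbol{\varphi}}(\mathbf x)\in\R^{r}$, $\bar{\mathbf A}_p\in\R^{r\times D}$ and $\bar{\mathbf b}_p\in\R^r$, where $r=\sum_s r_s$. All three are partitioned conformally into the same $S$ row blocks.

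\emph{Numerator.} The standard identity for a product of conformally partitioned block matrices gives
\[
\bar{\boldsymbol{\varphi}}(\mathbf x)^\top\bar{\mathbf A}_p=\sum_{s=1}^S\boldsymbol{\varphi}^{(s)}(\mathbf x)^\top\mathbf A_p^{(s)}.
\]
Transposing both sides shows this equals $\big(\sum_s \mathbf A_p^{(s)\top}\boldsymbol{\varphi}^{(s)}(\mathbf x)\big)^\top$. This is the numerator of \eqref{eq:compositional_positive_mean}, written as a row vector rather than a column vector.

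\emph{Denominator.} The same identity gives
\[
\bar{\boldsymbol{\varphi}}(\mathbf x)^\top\bar{\mathbf b}_p=\sum_s\boldsymbol{\varphi}^{(s)}(\mathbf x)^\top\mathbf b_p^{(s)}.
\]
Adding $\varepsilon$ to both sides recovers the denominator of \eqref{eq:compositional_positive_mean}. The appendix statement omits $\varepsilon$ while the main-text version includes it. I will carry $\varepsilon$ on both sides, since the identity holds for any common offset, including $\varepsilon=0$ whenever the denominator is nonzero.

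\emph{Conclusion.} Dividing the numerator identity by the (nonzero) scalar denominator yields the claimed formula for $\mu_{p,\mathrm{comp}}(\mathbf x)$, up to the row/column orientation convention. Subtracting the same repulsive mean $\mu_q(\mathbf x)$ from both sides gives equality of the fields. The right-hand side is exactly the single-summary field \eqref{eq:nystrom_V} with feature map $\bar{\boldsymbol{\varphi}}$ and summaries $(\bar{\mathbf A}_p,\bar{\mathbf b}_p)$.

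\emph{Remark on interpretation.} One may check that $(\bar{\mathbf A}_p,\bar{\mathbf b}_p)$ are genuinely "single summaries" of the form \eqref{eq:summaries_p}. This holds if each shard's feature map is extended by zero on points outside that shard. Then
\[
\bar{\mathbf A}_p=\sum_j \tilde{\boldsymbol{\varphi}}(\mathbf y_j^+)\mathbf y_j^{+\top},
\]
where $\tilde{\boldsymbol{\varphi}}(\mathbf y_j^+)$ has $\boldsymbol{\varphi}^{(s)}(\mathbf y_j^+)$ in block $s$ for $j\in\mathcal I_p^{(s)}$ and zeros in the other blocks. The theorem only asserts the displayed identity, so this is not needed for the proof.

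The argument has no real obstacle. The only care required is bookkeeping: the transpose and orientation conventions, and applying the offset $\varepsilon$ consistently on both sides.
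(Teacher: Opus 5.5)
Your proof is correct and follows the paper's argument: expand $\bar{\boldsymbol{\varphi}}(\mathbf x)^\top\bar{\mathbf A}_p$ and $\bar{\boldsymbol{\varphi}}(\mathbf x)^\top\bar{\mathbf b}_p$ blockwise as sums over shards, substitute them into the single-summary quotient, and note that sharding does not affect the repulsive term. You are more careful than the paper about the row/column orientation and about the offset $\varepsilon$, which the appendix statement silently drops even though $\mu_{p,\mathrm{comp}}$ is defined with it.
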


\begin{proof}
By the definition of block concatenation,
\[
\bar{\boldsymbol{\varphi}}(\mathbf x)^\top \bar{\mathbf A}_p
=
\sum_{s=1}^S 
\boldsymbol{\varphi}^{(s)}(\mathbf x)^\top \mathbf A_p^{(s)},
\qquad
\bar{\boldsymbol{\varphi}}(\mathbf x)^\top \bar{\mathbf b}_p
=
\sum_{s=1}^S 
\boldsymbol{\varphi}^{(s)}(\mathbf x)^\top \mathbf b_p^{(s)}.
\]
Substituting these identities into the single-summary attractive mean gives
\[
\frac{\bar{\boldsymbol{\varphi}}(\mathbf x)^\top \bar{\mathbf A}_p}
     {\bar{\boldsymbol{\varphi}}(\mathbf x)^\top \bar{\mathbf b}_p}
=
\frac{
\sum_{s=1}^S 
\boldsymbol{\varphi}^{(s)}(\mathbf x)^\top \mathbf A_p^{(s)}
}{
\sum_{s=1}^S 
\boldsymbol{\varphi}^{(s)}(\mathbf x)^\top \mathbf b_p^{(s)}
}
=
\mu_{p,\mathrm{comp}}(\mathbf x).
\]
The repulsive mean $\mu_q(\mathbf x)$ is computed exactly over the current generated batch and is unchanged by the sharding of the positive summaries. Therefore,
\[
\mathbf V_{p,q}^{\mathrm{comp}}(\mathbf x)
=
\mu_{p,\mathrm{comp}}(\mathbf x)-\mu_q(\mathbf x)
\]
is exactly the field obtained by evaluating the concatenated attractive summary and subtracting the same exact repulsive mean.
\end{proof}

\begin{theorem}[Distortion of the projected attractive field]
\label{thm:deterministic_field_distortion}
Let $\{\mathbf y_j^+\}_{j=1}^{N^+}\subset\mathbb R^D$ satisfy
\[
  \|\mathbf y_j^+\| \le R,
  \qquad j=1,\dots,N^+,
\]
and let $k$ be a nonnegative kernel. Let $k_{\mathbf U}$ be any projected kernel
induced by a landmark set $\mathbf U$, for example the Nystr\"om kernel from
Section~\ref{sec:method}. For any query $\mathbf x\in\mathbb R^D$, define
\[
  \boldsymbol{\mu}_p(\mathbf x)
  :=
  \frac{\frac{1}{N^+}\sum_{j=1}^{N^+} k(\mathbf x,\mathbf y_j^+) \mathbf y_j^+}
       {\frac{1}{N^+}\sum_{j=1}^{N^+} k(\mathbf x,\mathbf y_j^+)},
  \qquad
  \boldsymbol{\mu}_p^{\mathbf U}(\mathbf x)
  :=
  \frac{\frac{1}{N^+}\sum_{j=1}^{N^+} k_{\mathbf U}(\mathbf x,\mathbf y_j^+) \mathbf y_j^+}
       {\frac{1}{N^+}\sum_{j=1}^{N^+} k_{\mathbf U}(\mathbf x,\mathbf y_j^+)},
\]
and
\[
  \mathbf V_p^+(\mathbf x) := \boldsymbol{\mu}_p(\mathbf x) - \mathbf x,
  \qquad
  \mathbf V_{p,\mathbf U}^+(\mathbf x) := \boldsymbol{\mu}_p^{\mathbf U}(\mathbf x) - \mathbf x.
\]

Define the query-wise kernel residual vector
\[
  \mathbf r_{\mathbf U}(\mathbf x)
  :=
  \bigl(
    k(\mathbf x,\mathbf y_1^+) - k_{\mathbf U}(\mathbf x,\mathbf y_1^+),
    \dots,
    k(\mathbf x,\mathbf y_{N^+}^+) - k_{\mathbf U}(\mathbf x,\mathbf y_{N^+}^+)
  \bigr)^\top
  \in \mathbb R^{N^+},
\]
and the exact positive normalizer
\[
  d_p(\mathbf x)
  :=
  \frac{1}{N^+}\sum_{j=1}^{N^+} k(\mathbf x,\mathbf y_j^+).
\]
Assume $d_p(\mathbf x)>0$. If
\begin{equation}
\label{eq:local_denominator_condition}
  \|\mathbf r_{\mathbf U}(\mathbf x)\|_2
  \le
  \frac{\sqrt{N^+}\,d_p(\mathbf x)}{2},
\end{equation}
then
\begin{equation}
\label{eq:local_field_distortion}
  \bigl\|
    \mathbf V_{p,\mathbf U}^+(\mathbf x) - \mathbf V_p^+(\mathbf x)
  \bigr\|
  \le
  \frac{4R}{\sqrt{N^+}\,d_p(\mathbf x)}
  \|\mathbf r_{\mathbf U}(\mathbf x)\|_2.
\end{equation}
Moreover, for the asymmetric \method{} field that uses projected attraction
and exact repulsion,
\[
  \mathbf V_{p,p}^{\mathbf U}(\mathbf x)
  =
  \boldsymbol{\mu}_p^{\mathbf U}(\mathbf x)-\boldsymbol{\mu}_p(\mathbf x),
\]
and therefore the same bound controls the equilibrium residual
$\|\mathbf V_{p,p}^{\mathbf U}(\mathbf x)\|$.
\end{theorem}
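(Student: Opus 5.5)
The plan is a standard perturbation argument for a ratio of weighted sums. Write the exact mean as $\boldsymbol{\mu}_p(\mathbf x)=\mathbf n(\mathbf x)/d_p(\mathbf x)$ with $\mathbf n(\mathbf x):=\frac{1}{N^+}\sum_j k(\mathbf x,\mathbf y_j^+)\mathbf y_j^+$. Similarly write the projected mean as $\boldsymbol{\mu}_p^{\mathbf U}(\mathbf x)=\mathbf n_{\mathbf U}(\mathbf x)/d_{\mathbf U}(\mathbf x)$, with the same definitions using $k_{\mathbf U}$. Because the $-\mathbf x$ terms cancel, $\mathbf V_{p,\mathbf U}^+(\mathbf x)-\mathbf V_p^+(\mathbf x)=\boldsymbol{\mu}_p^{\mathbf U}(\mathbf x)-\boldsymbol{\mu}_p(\mathbf x)$. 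It therefore suffices to bound the difference of the two barycenters.

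First I control the numerator and denominator perturbations using only the residual vector. We have $d_p-d_{\mathbf U}=\frac{1}{N^+}\sum_j r_j$, so Cauchy--Schwarz gives $|d_p-d_{\mathbf U}|\le \frac{1}{N^+}\sqrt{N^+}\|\mathbf r_{\mathbf U}\|_2=\|\mathbf r_{\mathbf U}\|_2/\sqrt{N^+}$. Likewise $\mathbf n-\mathbf n_{\mathbf U}=\frac{1}{N^+}\sum_j r_j\mathbf y_j^+$. Using $\|\mathbf y_j^+\|\le R$ and Cauchy--Schwarz again, $\|\mathbf n-\mathbf n_{\mathbf U}\|\le R\|\mathbf r_{\mathbf U}\|_2/\sqrt{N^+}$. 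The hypothesis \eqref{eq:local_denominator_condition} then yields $|d_p-d_{\mathbf U}|\le d_p/2$, hence $d_{\mathbf U}\ge d_p/2>0$. This guarantees that the projected mean is well defined, which is the one place the smallness condition is needed.

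Next I use the standard ratio decomposition
\[
\frac{\mathbf n_{\mathbf U}}{d_{\mathbf U}}-\frac{\mathbf n}{d_p}=\frac{\mathbf n_{\mathbf U}-\mathbf n}{d_{\mathbf U}}+\boldsymbol{\mu}_p\,\frac{d_p-d_{\mathbf U}}{d_{\mathbf U}}.
\]
The key point is that the second term involves the exact barycenter $\boldsymbol{\mu}_p$, not $\boldsymbol{\mu}_p^{\mathbf U}$. Since $k\ge0$, $\boldsymbol{\mu}_p$ is a convex combination of the $\mathbf y_j^+$, so $\|\boldsymbol{\mu}_p\|\le R$. Note that $k_{\mathbf U}$ may take negative values, so no such bound is available for $\boldsymbol{\mu}_p^{\mathbf U}$; this is why the decomposition is arranged this way. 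Combining the pieces with $1/d_{\mathbf U}\le 2/d_p$ gives
\[
\|\boldsymbol{\mu}_p^{\mathbf U}-\boldsymbol{\mu}_p\|\le \frac{2}{d_p}\Big(\frac{R\|\mathbf r_{\mathbf U}\|_2}{\sqrt{N^+}}+\frac{R\|\mathbf r_{\mathbf U}\|_2}{\sqrt{N^+}}\Big)=\frac{4R}{\sqrt{N^+}\,d_p}\|\mathbf r_{\mathbf U}\|_2,
\]
which is exactly \eqref{eq:local_field_distortion}.

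For the equilibrium statement, take $q=p$ with the same support. The exact repulsive mean is then $\boldsymbol{\mu}_p$, so $\mathbf V^{\mathbf U}_{p,p}=\boldsymbol{\mu}_p^{\mathbf U}-\boldsymbol{\mu}_p$ by definition \eqref{eq:nystrom_V}, and the bound just proved applies verbatim. Here the $\varepsilon$ offsets are ignored, consistent with the idealized definitions in the statement. The main obstacle is the asymmetry between the two kernels. It is handled by lower-bounding $d_{\mathbf U}$ via the hypothesis and by placing the bounded exact barycenter, rather than the projected one, in the cross term.
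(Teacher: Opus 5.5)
Your proposal is correct and follows essentially the same route as the paper's proof. Both arguments bound the numerator and denominator perturbations by $R\|\mathbf r_{\mathbf U}(\mathbf x)\|_2/\sqrt{N^+}$ and $\|\mathbf r_{\mathbf U}(\mathbf x)\|_2/\sqrt{N^+}$, add and subtract $\mathbf n/d_{\mathbf U}$ so that the cross term carries the exact barycenter (which has norm at most $R$ because $k \ge 0$), and use the hypothesis to get $d_{\mathbf U}\ge d_p/2$; the equilibrium claim then follows in both by identifying the exact repulsive mean with $\boldsymbol{\mu}_p$.
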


\begin{proof}
Fix a query $\mathbf x$. Define
\[
  \mathbf n_p(\mathbf x)
  :=
  \frac{1}{N^+}\sum_{j=1}^{N^+} k(\mathbf x,\mathbf y_j^+) \mathbf y_j^+,
  \qquad
  d_p(\mathbf x)
  :=
  \frac{1}{N^+}\sum_{j=1}^{N^+} k(\mathbf x,\mathbf y_j^+),
\]
and similarly
\[
  \mathbf n_{p,\mathbf U}(\mathbf x)
  :=
  \frac{1}{N^+}\sum_{j=1}^{N^+} k_{\mathbf U}(\mathbf x,\mathbf y_j^+) \mathbf y_j^+,
  \qquad
  d_{p,\mathbf U}(\mathbf x)
  :=
  \frac{1}{N^+}\sum_{j=1}^{N^+} k_{\mathbf U}(\mathbf x,\mathbf y_j^+).
\]
Then
\[
  \boldsymbol{\mu}_p(\mathbf x)
  =
  \frac{\mathbf n_p(\mathbf x)}{d_p(\mathbf x)},
  \qquad
  \boldsymbol{\mu}_p^{\mathbf U}(\mathbf x)
  =
  \frac{\mathbf n_{p,\mathbf U}(\mathbf x)}{d_{p,\mathbf U}(\mathbf x)}.
\]
Since the $\mathbf x$ terms cancel,
\[
  \mathbf V_{p,\mathbf U}^+(\mathbf x)-\mathbf V_p^+(\mathbf x)
  =
  \boldsymbol{\mu}_p^{\mathbf U}(\mathbf x)-\boldsymbol{\mu}_p(\mathbf x)
  =
  \frac{\mathbf n_{p,\mathbf U}(\mathbf x)}{d_{p,\mathbf U}(\mathbf x)}
  -
  \frac{\mathbf n_p(\mathbf x)}{d_p(\mathbf x)}.
\]

By definition of $\mathbf r_{\mathbf U}(\mathbf x)$,
\[
  \mathbf n_{p,\mathbf U}(\mathbf x)-\mathbf n_p(\mathbf x)
  =
  -\frac{1}{N^+}\sum_{j=1}^{N^+}
  [\mathbf r_{\mathbf U}(\mathbf x)]_j\,\mathbf y_j^+,
\]
and
\[
  d_{p,\mathbf U}(\mathbf x)-d_p(\mathbf x)
  =
  -\frac{1}{N^+}\sum_{j=1}^{N^+}
  [\mathbf r_{\mathbf U}(\mathbf x)]_j.
\]
Using $\|\mathbf y_j^+\|\le R$,
\[
  \|\mathbf n_{p,\mathbf U}(\mathbf x)-\mathbf n_p(\mathbf x)\|
  \le
  \frac{R}{N^+}\|\mathbf r_{\mathbf U}(\mathbf x)\|_1
  \le
  \frac{R}{\sqrt{N^+}}\|\mathbf r_{\mathbf U}(\mathbf x)\|_2,
\]
and
\[
  |d_{p,\mathbf U}(\mathbf x)-d_p(\mathbf x)|
  \le
  \frac{1}{N^+}\|\mathbf r_{\mathbf U}(\mathbf x)\|_1
  \le
  \frac{1}{\sqrt{N^+}}\|\mathbf r_{\mathbf U}(\mathbf x)\|_2.
\]

Now add and subtract $\mathbf n_p(\mathbf x)/d_{p,\mathbf U}(\mathbf x)$:
\[
  \left\|
    \frac{\mathbf n_{p,\mathbf U}(\mathbf x)}{d_{p,\mathbf U}(\mathbf x)}
    -
    \frac{\mathbf n_p(\mathbf x)}{d_p(\mathbf x)}
  \right\|
  \le
  \frac{
    \|\mathbf n_{p,\mathbf U}(\mathbf x)-\mathbf n_p(\mathbf x)\|
    +
    \left\|\frac{\mathbf n_p(\mathbf x)}{d_p(\mathbf x)}\right\|
    |d_{p,\mathbf U}(\mathbf x)-d_p(\mathbf x)|
  }{d_{p,\mathbf U}(\mathbf x)}.
\]
Since $k$ is nonnegative and $d_p(\mathbf x)>0$,
$\mathbf n_p(\mathbf x)/d_p(\mathbf x)$ is a convex combination of
$\{\mathbf y_j^+\}_{j=1}^{N^+}$. Therefore,
\[
  \left\|\frac{\mathbf n_p(\mathbf x)}{d_p(\mathbf x)}\right\|
  \le R.
\]
Moreover,
\[
  d_{p,\mathbf U}(\mathbf x)
  \ge
  d_p(\mathbf x)-|d_{p,\mathbf U}(\mathbf x)-d_p(\mathbf x)|
  \ge
  d_p(\mathbf x)
  -
  \frac{1}{\sqrt{N^+}}\|\mathbf r_{\mathbf U}(\mathbf x)\|_2.
\]
Under \eqref{eq:local_denominator_condition},
\[
  d_{p,\mathbf U}(\mathbf x)
  \ge
  \frac{d_p(\mathbf x)}{2}.
\]
Substituting these bounds gives
\[
  \left\|
    \frac{\mathbf n_{p,\mathbf U}(\mathbf x)}{d_{p,\mathbf U}(\mathbf x)}
    -
    \frac{\mathbf n_p(\mathbf x)}{d_p(\mathbf x)}
  \right\|
  \le
  \frac{
    \frac{R}{\sqrt{N^+}}\|\mathbf r_{\mathbf U}(\mathbf x)\|_2
    +
    R\frac{1}{\sqrt{N^+}}\|\mathbf r_{\mathbf U}(\mathbf x)\|_2
  }{d_p(\mathbf x)/2}
  =
  \frac{4R}{\sqrt{N^+}\,d_p(\mathbf x)}
  \|\mathbf r_{\mathbf U}(\mathbf x)\|_2.
\]
This proves \eqref{eq:local_field_distortion}.

For the final claim, when the model distribution matches the positive
distribution, the implemented asymmetric field uses projected attraction and
exact repulsion:
\[
  \mathbf V_{p,p}^{\mathbf U}(\mathbf x)
  =
  \boldsymbol{\mu}_p^{\mathbf U}(\mathbf x)
  -
  \boldsymbol{\mu}_p(\mathbf x).
\]
Thus,
\[
  \|\mathbf V_{p,p}^{\mathbf U}(\mathbf x)\|
  =
  \|\boldsymbol{\mu}_p^{\mathbf U}(\mathbf x)-\boldsymbol{\mu}_p(\mathbf x)\|,
\]
which is exactly the quantity bounded above.
\end{proof}

\begin{corollary}[On-support distortion]
\label{cor:on_support_field_distortion}
Let $\mathbf K,\mathbf K_{\mathbf U}\in\mathbb R^{N^+\times N^+}$ be the exact and projected
Gram matrices on $\{\mathbf y_j^+\}_{j=1}^{N^+}$, with entries
\[
  [\mathbf K]_{ij}=k(\mathbf y_i^+,\mathbf y_j^+),
  \qquad
  [\mathbf K_{\mathbf U}]_{ij}=k_{\mathbf U}(\mathbf y_i^+,\mathbf y_j^+).
\]
Define
\[
  \kappa_{\min}
  :=
  \min_{1\le i\le N^+}
  \frac{1}{N^+}\sum_{j=1}^{N^+} k(\mathbf y_i^+,\mathbf y_j^+),
\]
and assume
\[
  \|\mathbf K-\mathbf K_{\mathbf U}\|_{2,\infty}
  :=
  \max_{1\le i\le N^+}
  \|(\mathbf K-\mathbf K_{\mathbf U})_{i,:}\|_2
  \le
  \frac{\sqrt{N^+}\,\kappa_{\min}}{2}.
\]
Then
\[
  \frac{1}{N^+}\sum_{i=1}^{N^+}
  \bigl\|
    \mathbf V_{p,\mathbf U}^+(\mathbf y_i^+)
    -
    \mathbf V_p^+(\mathbf y_i^+)
  \bigr\|^2
  \le
  \frac{16R^2}{{N^+}^2\kappa_{\min}^2}
  \|\mathbf K-\mathbf K_{\mathbf U}\|_F^2.
\]
\end{corollary}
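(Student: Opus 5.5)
The plan is to apply Theorem~\ref{thm:deterministic_field_distortion} pointwise at each support point $\mathbf x=\mathbf y_i^+$, then average the squared bounds. Fix $i$ and take the query $\mathbf x=\mathbf y_i^+$. The residual vector $\mathbf r_{\mathbf U}(\mathbf y_i^+)$ has $j$-th entry $k(\mathbf y_i^+,\mathbf y_j^+)-k_{\mathbf U}(\mathbf y_i^+,\mathbf y_j^+)$. This is exactly the $i$-th row of $\mathbf K-\mathbf K_{\mathbf U}$, so $\|\mathbf r_{\mathbf U}(\mathbf y_i^+)\|_2=\|(\mathbf K-\mathbf K_{\mathbf U})_{i,:}\|_2$. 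Likewise, the exact normalizer is $d_p(\mathbf y_i^+)=\frac{1}{N^+}\sum_j k(\mathbf y_i^+,\mathbf y_j^+)\ge\kappa_{\min}$.

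\textbf{Step 1: the hypotheses hold at every support point.} We need $d_p(\mathbf y_i^+)>0$, which requires $\kappa_{\min}>0$. This is implicit in the statement, since the bound is otherwise vacuous or undefined, and for the Laplace kernel it holds because $k(\mathbf y_i^+,\mathbf y_i^+)=1$. For the denominator condition \eqref{eq:local_denominator_condition}, chain the inequalities
\[
\|\mathbf r_{\mathbf U}(\mathbf y_i^+)\|_2\le\|\mathbf K-\mathbf K_{\mathbf U}\|_{2,\infty}\le\tfrac{\sqrt{N^+}\kappa_{\min}}{2}\le\tfrac{\sqrt{N^+}d_p(\mathbf y_i^+)}{2}.
\]
So Theorem~\ref{thm:deterministic_field_distortion} applies for every $i$.

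\textbf{Step 2: pointwise bound.} The theorem gives
\[
\|\mathbf V_{p,\mathbf U}^+(\mathbf y_i^+)-\mathbf V_p^+(\mathbf y_i^+)\|\le\frac{4R}{\sqrt{N^+}\,d_p(\mathbf y_i^+)}\|(\mathbf K-\mathbf K_{\mathbf U})_{i,:}\|_2.
\]
Using $d_p(\mathbf y_i^+)\ge\kappa_{\min}$, we can replace $d_p(\mathbf y_i^+)$ by $\kappa_{\min}$ on the right. Squaring then yields the bound $\frac{16R^2}{N^+\kappa_{\min}^2}\|(\mathbf K-\mathbf K_{\mathbf U})_{i,:}\|_2^2$.

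\textbf{Step 3: averaging.} Average over $i$ with weight $\frac{1}{N^+}$ and use $\sum_i\|(\mathbf K-\mathbf K_{\mathbf U})_{i,:}\|_2^2=\|\mathbf K-\mathbf K_{\mathbf U}\|_F^2$. This produces exactly the stated factor $\frac{16R^2}{{N^+}^2\kappa_{\min}^2}$.

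\textbf{Main obstacle.} There is no real technical obstacle. The only delicate point is bookkeeping: identifying the query-wise residual with the Gram-matrix row, and making sure the uniform row-norm assumption, rather than a Frobenius assumption, is what certifies the denominator condition at every $i$ simultaneously.
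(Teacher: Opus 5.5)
Your proposal is correct and follows the paper's proof: apply the distortion theorem at each $\mathbf x=\mathbf y_i^+$, identify $\mathbf r_{\mathbf U}(\mathbf y_i^+)$ with the $i$th row of $\mathbf K-\mathbf K_{\mathbf U}$, lower-bound $d_p(\mathbf y_i^+)$ by $\kappa_{\min}$, then square, average, and sum row norms into the Frobenius norm. You go slightly further than the paper by explicitly checking, via the uniform row-norm assumption, that the denominator condition holds at every support point, and by noting that $\kappa_{\min}>0$ is implicitly required.
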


\begin{proof}
Apply Theorem~\ref{thm:deterministic_field_distortion} with $\mathbf x=\mathbf y_i^+$.
Then the residual vector $\mathbf r(\mathbf y_i^+)$ is exactly the $i$th row of
$\mathbf K-\mathbf K_{\mathbf U}$, and $d(\mathbf y_i^+) \ge \kappa_{\min}$.
Thus, for every $i$,
\[
  \bigl\|
    \mathbf V_{p,\mathbf U}^+(\mathbf y_i^+)
    -
    \mathbf V_p^+(\mathbf y_i^+)
  \bigr\|
  \le
  \frac{4R}{\sqrt{N^+}\,\kappa_{\min}}
  \|(\mathbf K-\mathbf K_{\mathbf U})_{i,:}\|_2.
\]
Square and average over $i$, then use
\[
  \sum_{i=1}^{N^+}\|(\mathbf K-\mathbf K_{\mathbf U})_{i,:}\|_2^2
  =
  \|\mathbf K-\mathbf K_{\mathbf U}\|_F^2.
\]
\end{proof}

\end{document}